\newcounter{ass_counter}
\newcounter{thm_counter}
\newcounter{remark_counter}
\newtheorem{theorem}[thm_counter]{Theorem}
\newtheorem{lemma}[thm_counter]{Lemma}
\newtheorem{assumption}[ass_counter]{Assumption}
\newtheorem{remark}[remark_counter]{Remark}
\def\MIF{\text{\bf MIU}}
\DeclareMathOperator*{\argmax}{arg\,max}
\newcommand\relphantom[1]{\mathrel{\phantom{#1}}}
\newcommand{\cN}{\mathcal{N}}
\newcommand{\R}{\mathbb{R}}
\newcommand{\E}{\mathbb{E}}
\title{AutoML from Service Provider's Perspective: Multi-device, Multi-tenant Model Selection with GP-EI}
\author[1]{Chen Yu\thanks{\texttt{cyu28@ur.rochester.edu}}}
\author[2]{Bojan Karla$\check{\textrm{s}}$\thanks{\texttt{karlasb@student.ethz.ch}}}
\author[3]{Jie Zhong\thanks{\texttt{jie.zhong@calstatela.edu}}}
\author[2]{Ce Zhang\thanks{\texttt{ce.zhang@inf.ethz.ch}}}
\author[1,4]{Ji Liu\thanks{\texttt{ji.liu.uwisc@gmail.com}}}
\affil[1]{Department of Computer Science, University of Rochester}
\affil[2]{Department of Computer Science, ETH Zurich}
\affil[3]{Department of mathematics, California state university Los Angeles}
\affil[4]{Tencent AI Lab}
\begin{document}

\maketitle






\begin{abstract}

AutoML has become a popular service that is provided by
most leading cloud service providers today. In this paper,
we focus on the AutoML problem from the \emph{service provider's perspective}, motivated 
by the following practical consideration: When an
AutoML service needs to serve {\em multiple users}
with {\em multiple devices} at the same time, how can we allocate 
these devices to users in an efficient way? We focus on GP-EI, one of the most popular algorithms
for automatic model selection and hyperparameter tuning,
used by systems such as Google Vizer.
The technical contribution of this paper is the first
multi-device, multi-tenant algorithm for GP-EI that
is aware of \emph{multiple} computation devices
and multiple users sharing the same set of
computation devices. Theoretically, given $N$ users
and $M$ devices, we obtain a regret bound of $O((\text{\bf {MIU}}(T,K) + M)\frac{N^2}{M})$, where $\text{\bf {MIU}}(T,K)$ 
refers to the maximal incremental uncertainty up to time $T$ for the covariance matrix $K$. 
Empirically, we evaluate our algorithm on two applications
of automatic model selection, and show that our
algorithm significantly 
outperforms the strategy of serving users independently.
Moreover, when multiple computation devices are available, we
achieve near-linear speedup when the number of
users is much larger than the number of devices.

\end{abstract}

\section{Introduction}

\begin{figure}
	\centering
	\includegraphics[width=6cm]{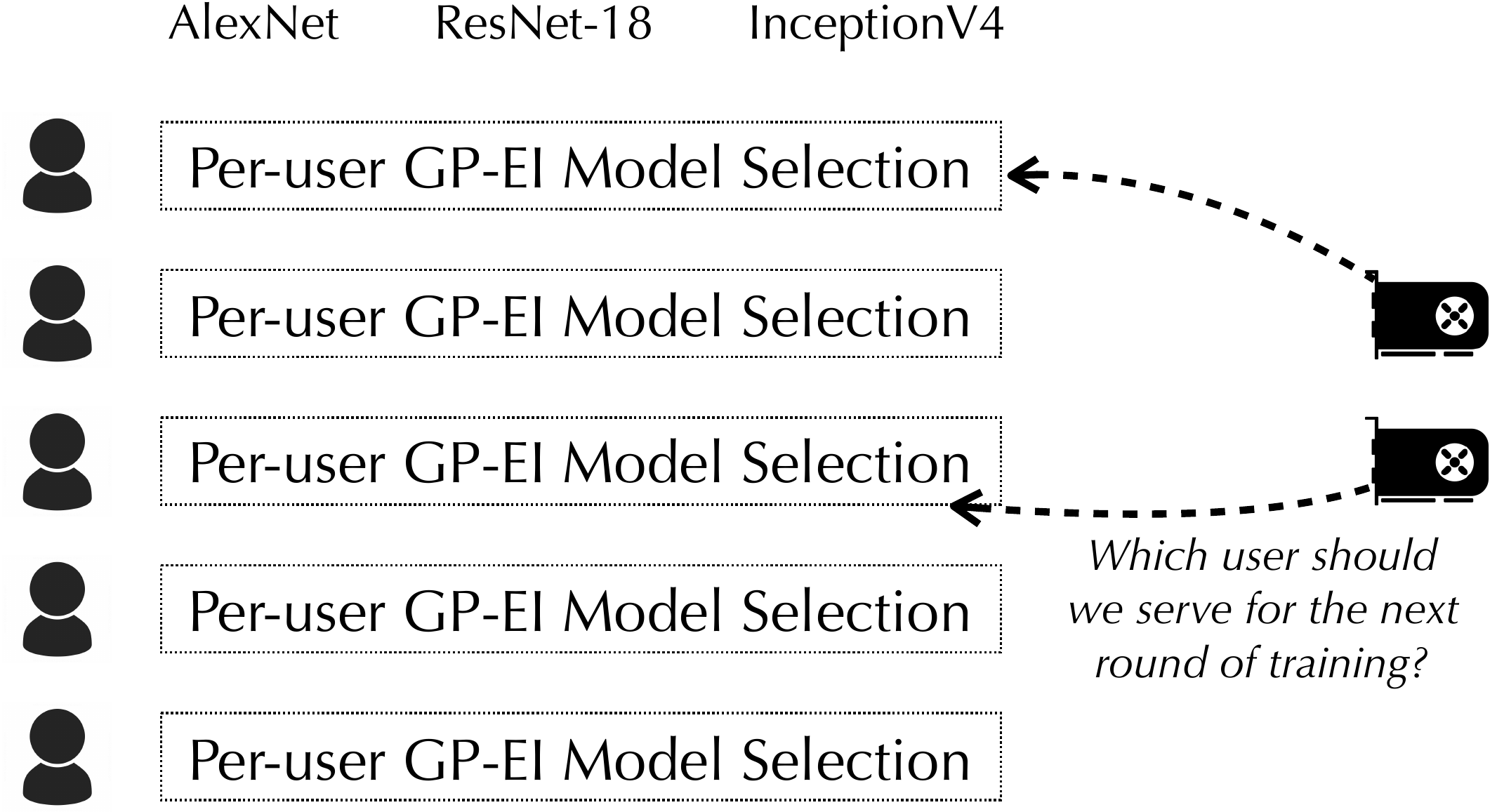}
	\caption{Multi-device, Multi-tenant Model Selection} \label{fig:example}
\end{figure}

One of the next frontiers of machine learning research
is its accessibility --- How can we make machine learning 
systems easy to use such that users do not need to worry
about decisions such as model selection and hyperparameter
tuning as much as today? The industry's answer to this 
question seems to be making AutoML services available on 
the cloud, and prominent examples include Google Cloud 
AutoML and Microsoft Cognitive Services. In these services,
users are provided with a single interface for uploading 
the data, automatically dealing
with hyperparameter tuning and/or model selection, and
returning a model directly without any user intervention as shown in Figure~\ref{fig:example}.
Bayesian optimization is one of the core techniques that
make AutoML services possible by strategically planning
a series of models and hyperparameter configurations 
to tune and try.

From the service provider point of view, resource allocation
is the problem coming together with
the emerging popularity of such a service --- When an online AutoML
service needs to serve multiple users with limited
number of devices, what is the most cost efficient way 
of allocating these devices to different users? During
our conversation with multiple cloud service providers,
many of them believe that an effective and cost efficient
resource sharing could be of great practical interests
and is a natural technical question to ask.


In this paper, 
we focus on GP-EI,
one of the most popular algorithms for AutoML 
that is used in systems such as
Google Vizier~\cite{GoogleVizier} and Spearmint~\cite{Spearmint}. Specifically, we are interested in
the scenarios that each user runs her own GP-EI instance
on a different machine learning task, and there are multiple
devices, each of which can only serve one user at the same time.
{\em How to allocate resources?}
{\em What are the theoretical properties of such an algorithm?}





The result of this paper is the first 
\emph{multi-device}, \emph{multi-tenant}, \emph{cost sensitive} GP-EI algorithm
that aims at optimizing for the ``global happiness'' of all
users given limited resources.
In order to analyze its performance, we introduce a new notation
of Maximum Incremental Uncertainty (\text{\bf {MIU}}) to measure dependence among all models. Given $N$ users and $M$ devices, the upper bound of cumulative regret is $O((\text{\bf {MIU}}(T,K) + M)\frac{N^2}{M})$, where $\text{\bf {MIU}}(T,K)$  will be specified in Section \textbf{5.2}. This bound converges to optimum and is nearly linear speedup when more devices are employed, which will be discussed also in Section \textbf{5.2}.

We evaluate our algorithms on two real-world datasets: (1)
model selection for image classification tasks that contains
$22$ datasets (users) and $8$ different neural network 
architectures; and (2) model selection for models available
on Microsoft Azure Machine Learning Studio that contains 
$17$ datasets and $8$ different machine learning models.
We find that, multi-tenant GP-EI outperforms standard 
GP-EI serving users randomly or in a round robin fashion,
sometimes by up to $5\times$ in terms of the time it
needs to reach the same ``global happiness'' of all users.
When multiple devices are available, it can provide
near linear speedups to the performance of our algorithm.

\section{Related Work}

\paragraph{Multi-armed bandit problem}
There are tons of research on the multi-armed bandit problem. Some early work such as \citet{lowerbound} provided the lower bound of stochastic multi-armed bandit scenario and designed an algorithm to attain this lower bound. Then many research focused on improving constants of the bound and designing distribution-free-bound algorithms such as upper confidence bound (UCB) \citep{ucb} and Minimax Optimal Strategy in the Stochastic case (MOSS) \citep{moss}. UCB is now becoming a very important algorithm in bandit problem. Lots of algorithms are based on UCB, such as LUCB \citep{LUCB}, GP-UCB \citep{GPUCB}. The UCB algorithm constructs the upper confident bound for each arm in every iteration, and chooses the arm with largest bound as the next arm to observe. UCB is very efficient by effectively balancing exploration and exploitation, admitting the regret upper bound $O(\sqrt{T|\mathcal{L}|\log T})$ \citep{budeck}, where $T$ is running time, $\mathcal{L}$ is the set of arms.
There also exist some variations of the bandit problem other than stochastic bandit problem, such as contextual bandit problem \citep{zhangcontextual, contextual2017} and bandit optimization problem \citep{2012optimization,introonline}. We recommend readers to refer to the book by \citep{budeck}. Regret is a common metric of algorithms above, while another important metric is the sample complexity \citep{LUCB, ugape}. Recently there are also some research to combine bandit algorithms and Monte Carlo tree methods to find out an optimal path in a tree with random leaf nodes, that corresponds to finding the optimal strategy in a game. 
Representative algorithms include UCT \citep{UCT}, UGapE-MCTS \citep{bestarm}, and LUCB-micro \citep{structure}.

\paragraph{Expected improvement methods}
The expected improvement method dates back to 1970s, when \citet{EIstart} proposed to use the expected improvement function to decide which arm to choose in each iteration, that is, the arm is chosen with the maximal expected improvement. 
The advantage of this method is that the expected improvement can be computed analytically \citep{EIstart, analytically}. Recently, \citet{Spearmint} extended the idea of expected improvement to the time sensitive case by evaluating the expected improvement per second to make selection. We also adopt this concept in this paper, namely, \textbf{EIrate}. There are also some works on analyzing the asymptotically convergence of EI method. \citet{EI2016} analyzed the hit number of each non-optimal arm and the work by \citet{EI2011} provides the lower and upper bound of instantaneous regret when values of all arms are in Reproducing-Kernel Hilbert Space. Expected improvement methods have many application scenarios, see \citep{EIMS}.



\paragraph{GP-UCB}
GP-UCB is a type of approaches considering the correlation among all arms, while the standard UCB \citep{ucb} does not consider the correlation. GP-UCB chooses the arm with the largest UCB value in each iteration where the UCB value uses the correlation information. The proven regret achieves the rate $O(\sqrt{T\log T\log |\mathcal{L}|\gamma_T})$ where $T$ is running time, $\mathcal{L}$ is the set of arms, and $\gamma_T$ is the maximum information gain at time $T$. Some variants of GP-UCB include Branch and Bound algorithm \citep{GPBB}, EGP algorithm \citep{EGP}, distributed batch GP-UCB \citep{DBGPUCB}, MF-GP-UCB \citep{MFGPUCB}, BOCA \citep{BOCA}, GP-(UCB/EST)-DPP-(MAX/SAMPLE) \citep{EST}, to name a few.

\paragraph{Parallelization bandit algorithms}
To improve the efficiency of bandit algorithms, multiple agents can be employed and they can perform simultaneous investigation. \citet{parallel2017} designed an asynchronous parallel bandit algorithm that allows multiple agents working in parallel without waiting for each other. Both theoretical analysis and empirical studies validate that the nearly linear speedup can be achieved. \citet{parallelthompson} designed an asynchronous version of Thompson sampling algorithm to solve parallelization Bayesian bandit optimization problem. While they consider the single user scenario, our work extends the setup to the multi-user case, which leads to our new notation to reflect the \emph{global happiness} and needs some new technique in theoretical analysis.

\paragraph{AutoML}
Closely related to this work is the emerging trend of AutoML system
and services. Model
selection and hyperparameter tuning is the key technique behind such
services. Prominent systems include
Spark TuPAQ~\citep{Sparks2015}, Auto-WEKA~\citep{Kotthoff2017,Thornton2013}, 
Google Vizier~\citep{Golovin2017}, Spearmint~\citep{Spearmint}, GPyOpt~\citep{gpyopt2016},
and Auto scikit-learn~\citep{Feurer2015} and prominent 
online services include Google Cloud AutoML and Microsoft Cognitive Services.
Most of these systems focus on a single-tenant setting. Recently,
\citet{system} describes a system for
what the authors call ``multi-tenant model selection''.
Our paper is motivated by this multi-tenant setting, however, 
we focus on a much more realistic choice of algorithm, GP-EI, 
that is actually used by real-world AutoML services.

\section{Mathematical Problem Statement}


In this section, we give the mathematical expression of the problem. We first introduce the multi-device, multi-tenant (MDMT) AutoML problem, which is a more general scenario of single-device, single-tenant (SDST) AutoML problem. Then in Section \ref{sec:TSHB}, we propose a new notion -- time sensitive hierarchical bandit (TSHB) problem to abstract the MDMT AutoML problem.  At last in Section \ref{sec:newregret}, we define a new metric to quantify the goal.

\paragraph{Single-device, single-tenant (SDST) AutoML} AutoML often refers to the single-device, single-tenant scenario, in which a single user aims finding the best model (or hyper parameter) for his or her individual dataset as soon as possible. Here the device is an abstract concept, it can refer to a server, or a CPU, GPU. The problem is usually formulated into a Bayesian optimization problem or a bandit problem using Gaussian process to characterize the connection among different models (or hyper parameters).   

\paragraph{Multi-device, multi-tenant (MDMT) AutoML} The MDMT AutoML considers the scenario where there are multiple devices available and multiple tenants who seek the best model for each individual dataset (one tenant corresponds to one dataset). While the objective of the SDST AutoML is purely from the perspective of a customer, the objective of MDMT AutoML is from the \emph{service provider}, because it is generally impossible to optimize performance for each single one, given the limited computing resource. There are two fundamental challenges:
1) When there are multiple devices are available, how to coordinate all computing resources to maximize the efficiency? Simply extending the algorithms in SDST AutoML often let multiple devices to run the same model on the same dataset, which apparently wastes the computing resource;
2) To serve multiple customer, we need to find a global metric to guide us to specify the most appropriate customer to serve besides of choosing the most promising the model for his / her dataset for each time.
The overall problem is how to utilize all devices to achieve a certain global happiness. Each device is considered to be atomic, that is, each device can only run one algorithm (model) on one dataset at the same time.

%
%

\subsection{Time sensitive hierarchical bandit (TSHB) problem} \label{sec:TSHB}

To find a systematic solution to the MDMT AutoML problem, we develop a new notion -- time sensitive hierarchical bandit (TSHB) problem to formulate the MDMT AutoML problem.




\paragraph{Definition of TSHB problem}
Now we formally propose the \textit{ime sensitive hierarchical bandit problem} to abstract the multi-device, multi-tenant autoML framework. Suppose that there are $N$ \emph{users} (or datasets in autoML framework) and $M$ \emph{devices}. Each user has a candidate set of \emph{models} (or algorithms in autoML framework) he or she is interested, specifically, $\mathcal{L}_i$ is the candidate model set for user $i \in [N]$. Here we consider a more general situation, that is we do not assume that $\mathcal{L}_i$ and $\mathcal{L}_j$ are disjoint for $i,j\in [N]$. Denote the set of all models by $\mathcal{L}=\mathcal{L}_1 \cup \mathcal{L}_2 \cup \cdots \cup \mathcal{L}_N$. 
Running a model $x \in \mathcal{L}$ on a device will take $c(x)$ units of time. One model can only be run on one device at the same time and one device can only run one model at the same time. Since one model has been assigned to an idle device, $c(x)$ units time later, the performance of model $x$ will be observed, denoted by $z(x)$. For example, the performance could be the accuracy of the model. W.L.O.G., we assume that the larger the value of $z(x)$, the better. Roughly saying, the overall goal is to utilize $M$ devices to help $N$ users to find out each individual optimal model from the corresponding candidate set as soon as possible. More technically, the goal (from the perspective of service provider) is to maximize the \emph{cumulative global happiness} over all users. We will define a regret to reflect this metric in the next subsection.



\begin{remark}
Here we simply assume $c(x)$ to be known beforehand. Although it is usually unknown beforehand, it is easy to estimate an approximate (but high accurate) value by giving the dataset set size, the computational hardware parameters, historical data, and other information. Therefore, for simplicity in analysis, we just use estimated value so that we can assume the runtime of each model to be exactly known beforehand. In our empirical study, this approximation does not degrade the performance of our algorithm. 
\end{remark}

\vspace{-0.5em}
\subsection{Regret definition for cumulative global happiness} \label{sec:newregret}

\vspace{-1em}
To quantify the goal -- cumulative global happiness, we define the corresponding regret. Let us first introduce some more notations and definitions:
\begin{itemize}[fullwidth]
\item $\mathcal{L}(t)$: the set of models whose performances have been \textbf{observed} up to time $t$;
\item $x^*_i$: the best model for user $i$, that is, $
x^*_i := \argmax_{x\in \mathcal{L}_i} z(x)$; 
\item $x^*_i(t)$: the best model for user $i$ observed up to time $t$, that is,
\begin{equation}\label{currentoptimalx}
x^*_i(t) := \argmax_{x\in \mathcal{L}(t)\cap \mathcal{L}_i} z(x).
\end{equation}
\end{itemize}


We define the individual regret (or negative individual happiness) of user $i$ at time $t$ by the gap between the currently best and the optimal, i.e., $z(x^*_i) - z(x^*_i(t))$.

In most AutoML systems, the user experience goes beyond
the regret at a single time point for a single user.
Instead, the regret is defined by the integration over time and the sum over all users' regrets. It is worth noting that the regret for each user is not the same as the one in the SDST scenario since even a user is not served currently, he or she still receives the penalty (measured by the gap between the optimal model's performance and the currently best performance). More formally, the regret at time $T$ is defined by
\begin{equation} \label{regret}
	\textbf{Regret}_T = \sum\limits_{i=1}^N \int_0^T  \Big(z(x_i^*)-z\big(x_i^*(t)\big)\Big)\text{d}t.
\end{equation}
Our goal is to utilize all devices to minimize this regret.

{\bf Discussion: Why is Multi-Device Important?} 
Having multiple devices in the pool of computation resources
does not necessarily mean that we need to have a multi-device GP-EI algorithm
to do the scheduling. One naive solution, which is adopted by
ease.ml~\cite{system} is to treat all devices as a single
device to do {\em distributed} training for each training task. 
In fact, If the training process can scale up linearly 
to all devices in the computation pool, such a baseline
strategy might not be too bad. However, the availability of
resources provided on a modern cloud is far larger than
the current limitation of distributed training --- Whenever
the scalability becomes sublinear, some resources could be
used to serve other users instead. Given the growth rate
of online machine learning services, we believe the multi-device
setting will only become more important.


\vspace{-0.5em}
\section{Algorithm}

\vspace{-1em}
The proposed Multi-device Multi-tenant GP-EI (MM-GP-EI) algorithm follows a simple philosophy -- as long as there is a device available, select a model to run on this device. To minimize the regret (or equivalently maximize the cumulative global happiness), the key is to select a promising model to run whenever there is an available device. We use the expected improvement rate ({\bf EIrate}) to measure the quality of each model in the set of models that have not yet been selected before (selected models include the ones whose performance have been observed or that are under test currently). The {\bf EIrate} value for model $x$ depends on two factors: the running time of model $x$ and its expected improvement ({\bf EI}) value (defined as Expected Improvement Function in Section \ref{sec:newEI})
\[
{\bf EIrate}(x) := {\bf EI}(x)/c(x).
\]
This measures the averaged expected improvement. This concept also appeared in \citet{Spearmint}.

\subsection{Expected Improvement Function} \label{sec:newEI}

\vspace{-1em}
Every Bayesian-based optimization algorithm has a function called acquisition function \citep{tutorial} that guides the search for the next model to test. In our algorithm, we will call it expected improvement function (EI function).

Suppose at time $t$ these is a device free, we first compute posterior distributions for all models given all current observation and then use these posterior distributions to construct EI function for every model.


First, for each model $x$ and any user who has this model (notice that different users can share the same model), we use $\textbf{EI}_{i,t}(x)$ to denote expected improvement of user $i$'s best performance if model $x$ is observed. Formally, we have 
\begin{equation} \label{EI}
	\textbf{EI}_{i,t}(x) = \E\Big[\max\big\{z(x)-z\big(x_i^*(t)\big),0\big\} \Big].
\end{equation}
Here $\E$ means taking expectation of posterior distribution of $z(x)$ at time $t$.

Then, we sum this value over all users who have model $x$ to represent the total expected improvement $\textbf{EI}_t(x)$ if model $x$ is observed. Formally, we have
\begin{equation}\label{EIsum}
	\textbf{EI}_t(x) = \sum\limits_{i=1}^N \mathbbm{1}(x\in \mathcal{L}_i)\textbf{EI}_{i,t}(x),
\end{equation}
where $\mathbbm{1}(A)=1$ if $A$ happens, and $\mathbbm{1}(A)=0$ if $A$ does not happen. At last, we define the \textbf{EIrate} value of $x$ at time $t$ as follows:
\begin{equation} \label{EIrate}
	\textbf{EIrate}_t(x) = \frac{\textbf{EI}_t(x)}{c(x)}.
\end{equation}

Now, we can choose the model with the max value of EIrate as the next one to run at time $t$:
\begin{equation} \label{next} 
	x_{\text{next to run at time $t$}}= \argmax \limits_{x\in \mathcal{L}\backslash \mathcal{L}(t)} \textbf{EIrate}_t(x).
\end{equation}

\begin{algorithm} [H]
	\caption{MM-GP-EI Algorithm}
	\label{alg1}
	\begin{algorithmic}[1]
		\REQUIRE $\mu(x)$, $k(x,x')$, $c(x)$, $\mathcal{L}$, $\{\mathcal{L}_i \}^N_{i=1}$ and the total time budget $T$.
		\STATE $x^{(i)}_{\text{initial}} = \argmax_{x\in \mathcal{L}_i} \mu(x), \forall i \in [N]$.
		\STATE $\mathcal{L}_{\text{ob}} = \big\{x^{(i)}_{\text{initial}}\big\}_{i=1}^N$
%
		\WHILE {there is a device available and the elapsed time $t$ is less than $T$}
			\STATE Refresh $\mathcal{L}_{\text{ob}}$ to include all observed models at present
			\STATE Update posterior mean $\mu_t(\cdot)$, posterior covariance $k_t(\cdot,\cdot')$ of $z(x)$ given $\{z(x)\}_{x\in \mathcal{L}_{\text{ob}}}$
			\STATE Update $x^*_i(t) = \argmax_{x\in \mathcal{L}_{\text{ob}}\cap \mathcal{L}_i} z(x), \forall i\in [N]$
			\STATE $\textbf{EI}(x) = \sum\limits_{i=1}^N \sum\limits_{x\in \mathcal{L}_i \backslash \mathcal{L}_{\text{ob}}} \sigma_t(x)\tau\Big(\frac{\mu_t(x)-z\big(x_i^*(t)\big)}{\sigma_t(x)}\Big),\forall x\in \mathcal{L}$
			\STATE Run $x_{\text{next}} = \argmax\limits_{x\in \mathcal{L}\backslash \mathcal{L}_{\text{ob}}} \frac{{\textbf{EI}}(x)}{c(x)}$ on this free device
		\ENDWHILE	
		\ENSURE $x^*_1(T), x^*_2(T), \cdots, x^*_N(T)$. 
	\end{algorithmic}
\end{algorithm}

\vspace{-0.5em}
\subsection{Choosing Prior: Gaussian Process}

\vspace{-1em}
Next, we must choose a suitable prior of $z(x)$ to estimate \textbf{EI} function in (\ref{EI}). Here, we choose Gaussian Process (GP) as the prior like many other Bayesian optimization algorithms \citep{GPUCB, EI2011}, mainly because of its convenience of computing posterior distribution and \textbf{EI} function.

A Gaussian Process $GP(\mu(x), k(x,x'))$ is determined by its mean function $\mu(x)$ and covariance function $k(x,x')$. If $z(x)$ has a GP prior $GP(\mu(x), k(x,x'))$, then after observing models in $\mathcal{L}(t)$ at time $t$, the posterior distribution of $z(x)$ given $\{z(x)\}_{x\in \mathcal{L}(t)}$ is also a Gaussian Process $GP(\mu_t(x), k_t(x,x'))$. Here, posterior mean $\mu_t(x)$ and variance $k_t(x,x')$ can be computed analytically. We give the formulas in Supplemental Meterials (\textbf{Section A}).

\textbf{EI} function can also be computed analytically if $z(x)$ obeys Gaussian Process (whatever prior or posterior). The following lemma gives the expression.
\begin{lemma} \label{lemma1}
	Let $\Phi(x)$ denote cumulative distribution function (CDF) of standard normal distribution and $\phi(x)$ denote probability density function (PDF) of standard normal distribution. Also, let $\tau(x) = x\Phi(x) + \phi(x)$. Then, if $X \sim \cN(\mu, \sigma^2)$, and $a\in \R$ is a constant, we have
\begin{equation*}
	\E\Big[\max\big\{X-a,0\big\}\Big] = \sigma\tau\Big(\frac{\mu-a}{\sigma}\Big).
\end{equation*}

\end{lemma}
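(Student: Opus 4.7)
The plan is to reduce the claim to a one-variable integral against the standard normal density by a linear change of variables, and then evaluate the resulting integral using the standard antiderivative identity $\phi'(y) = -y\phi(y)$.

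First I would assume $\sigma > 0$ (the degenerate case $\sigma = 0$ gives $\mathbb{E}[\max\{X-a,0\}] = \max\{\mu-a,0\}$, and one can check directly that this equals the limit $\sigma\tau((\mu-a)/\sigma)$ as $\sigma \downarrow 0$, since $\tau(t)/t \to 1$ as $t \to +\infty$ and $\tau(t) \to 0$ as $t \to -\infty$; I would mention this briefly). Under $\sigma > 0$, substitute $Y = (X-\mu)/\sigma \sim \mathcal{N}(0,1)$, so that $\max\{X-a,0\} = \sigma\max\{Y - u, 0\}$ where $u := (a-\mu)/\sigma$. This reduces the lemma to proving
\begin{equation*}
\mathbb{E}\bigl[\max\{Y - u, 0\}\bigr] = \tau(-u),
\end{equation*}
since $-u = (\mu-a)/\sigma$.

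Next I would expand the expectation as $\int_{u}^{\infty} (y - u)\phi(y)\,dy$ and split it into two pieces. For the first piece, the key fact is that $\phi(y) = \frac{1}{\sqrt{2\pi}}e^{-y^2/2}$ satisfies $\phi'(y) = -y\phi(y)$, so $\int_{u}^{\infty} y\phi(y)\,dy = [-\phi(y)]_{u}^{\infty} = \phi(u)$. For the second piece, $\int_{u}^{\infty}\phi(y)\,dy = 1 - \Phi(u) = \Phi(-u)$ by symmetry of the standard normal. Combining,
\begin{equation*}
\mathbb{E}\bigl[\max\{Y-u,0\}\bigr] = \phi(u) - u\,\Phi(-u) = \phi(-u) + (-u)\Phi(-u) = \tau(-u),
\end{equation*}
where I used $\phi(-u) = \phi(u)$. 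Multiplying by $\sigma$ yields the claim.

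There is no real obstacle here: the argument is a textbook calculation. The only subtleties worth being careful about are (i) correctly tracking the sign in the change of variable so that the argument of $\tau$ comes out as $(\mu-a)/\sigma$ rather than its negative, and (ii) handling the $\sigma = 0$ boundary case, which is only needed if the lemma is invoked at points where the posterior variance vanishes (e.g.\ already-observed models); both are straightforward.
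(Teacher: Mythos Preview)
Your proof is correct. Both you and the paper reduce to the standard normal via $Y=(X-\mu)/\sigma$ and then evaluate $\int_u^\infty (y-u)\phi(y)\,dy$; the difference is only in how that integral is computed. You split it as $\int y\phi - u\int \phi$ and use the antiderivative $\phi'(y)=-y\phi(y)$ to land directly on $\phi(u)-u\Phi(-u)=\tau(-u)$. The paper instead integrates by parts against the survival function $S(y)=1-\Phi(y)$ to get $\int_u^\infty S(y)\,dy = \int_{-\infty}^{-u}\Phi(y)\,dy$, and then invokes the representation $\tau(x)=\int_{-\infty}^x\Phi(t)\,dt$ (from $\tau'=\Phi$, $\tau(-\infty)=0$). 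Your route is slightly more direct and avoids that auxiliary identity; the paper's route has the mild advantage that its integral representation of $\tau$ is reused later (e.g.\ to see $\tau$ is nondecreasing). You also handle the $\sigma=0$ boundary case, which the paper omits.
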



This section ends by the detailed description of the proposed MM-GP-EI algorithm in Algorithm \ref{alg1}.

\paragraph{Discussion: How to Choose Prior Mean $\mu(x)$ and Prior Covariance $k(x,x')$}
Prior mean $\mu(x)$ and prior covariance $k(x,x')$ are chosen according to the specific problem. They often characterize some properties of models in the problem, such as expected value of models and correlations among different models. In our multi-device multi-tenant example, the parameters of Gaussian process can be obtained from historical experiences and the correlation depends on two factors: the similarity of algorithms and the similarity of users' datasets. We following 
standard AutoML practice (used in Google Vizier or ease.ml)
to construct the kernel matrix from historical runs.

\vspace{-0.5em}
\section {Main Result}

\vspace{-1em}
Before we introduce the main theoretical result, let us propose a new notation \emph{Maximum Incremental Uncertainty} (\MIF), which plays a key role in our theory.

\subsection{Maximum Incremental Uncertainty}
\vspace{-1em}
Suppose that $K$ is the kernel matrix, that is, $K :=[k(x,x')]_{(x,x'\in \mathcal{L})}$, where $k(x,x')$ is the kernel function and $\mathcal{L}$ is the set of all models as defined in Section \textbf{3.1}. So, $K$ is an $|\mathcal{L}|\times |\mathcal{L}|$ positive semi-definite (covariance) matrix. Suppose $S$ is a subset of
$[|\mathcal{L}|] := \{1,2,\cdots, |\mathcal{L}|\}$. Let $K_S$ be a submatrix of $K$ with columns and rows indexed by $S$.

We define the $s$-\MIF\ score of matrix $K$ $(1 \leq s \leq |\mathcal{L}|)$ by


\begin{equation*}
	\MIF_s(K) := \max\limits_{\substack{S'\subset S\subseteq [\mathcal{L}],\\ |S| = s, |S'| = s-1}} 
	\begin{cases}
    \sqrt{\frac{\det(K_S)}{\det(K_{S'})}},& \text{if } \det(K_{S'}) \neq 0;\\
    0,              & \text{otherwise},
\end{cases}
\end{equation*}
where we define $\det(K_{\varnothing}) = 1$. 

Let us understand the meaning of the notation $\MIF$. Given $|\mathcal{L}|$ Gaussian random variables with covariance matrix $K\in \mathbb{R}^{|\mathcal{L}|\times |\mathcal{L}|}$, $\det(K_{S'})$ denotes the total quantity of uncertainty for all random variables in $S'\subset \mathcal{L}$. $\det(K_S) / \det(K_{S'})$ denotes the incremental quantity of uncertainty by adding one more random variable into $S'$ to form $S$. If the added random variable can be linearly represented by random variables in $S'$, the incremental uncertainty is zero. If the added random variable is independent to all variables in $S$, the incremental uncertainty is the variance of the added variable. Therefore, $\MIF_s(K)$ measures the \emph{largest} incremental quantity of uncertainty from $s-1$ random variables to $s$ random variables in $\mathcal{L}$.

\begin{remark}
{\bf Why do not use Information Gain?}
People who are familiar with the concept of information gain (IG) may ask ``IG is a commonly used metric to measure how much uncertainty reduced after observed a sample. Why not use it here?'' 
Although IG and MIU essentially follow the same spirit, the IG metric is not suitable in our setup. Based on the mathematical definition of IG (see Lemma 5.3 in \citet{GPUCB}), it requires the observation noise of the sample to be \emph{nonzero} to make it valid (otherwise it is infinity), which makes it inappropriate in the main target scenario in this paper. In our motivating example -- cloud computing platform, the observation noise is usually considered to be zero, since no people run the same experiment twice. That motivates us to define a slightly different metric, (i.e., Maximum Incremental Uncertainty), to fix the non-observation-noise issue. 
\end{remark}

\vspace{-0.5em}
\subsection{Main Theorem}

\vspace{-1em}
To simplify the analysis and result, we make the following assumption commonly used for analyzing EI \citep{EI2011} and GP-UCB \citep{GPUCB}.

\begin{assumption} \label{assumption}
Assume that
\begin{itemize}[fullwidth]
\item	there exists a constant $R$ such that: $|z(x)-\mu_t(x)|\le \sigma_t(x)R$, for any model $x\in \mathcal{L}$ and any $t\ge 0$; 
\item $\sigma(x) \le 1$.
\end{itemize}
\end{assumption}

Now we are ready to provide the upper bound for the regret defined in \eqref{regret}.

\begin{theorem} \label{main}
Let $\text{\bf MIU}(T,K):= \sum_{s=2}^{|\mathcal{L}(t)|} \MIF_s(K)$. Under Assumption~\ref{assumption}, the regret of the output of Algorithm~\ref{alg1} up to time $T$ admits the following upper bound

	\begin{align*}
	       \text{\bf Regret}_T\lesssim \left(\text{\bf MIU}(T,K) + M\right) \frac{N^2}{M} \bar{c}.
	\end{align*}
	where $\bar{c}:= \frac{1
    }{N}\sum\limits_{i=1}^N c(x_i^*)$ is the average time cost of all optimal models, and $\lesssim$ means ``less than equal to'' up to a constant multiplier.
\end{theorem}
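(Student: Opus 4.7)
The plan is to discretize the regret integral using the observation completion events, bound the per-event contribution via the EIrate selection rule combined with Assumption~\ref{assumption} and Lemma~\ref{lemma1}, and then aggregate the cumulative posterior standard deviations using the $\text{\bf MIU}$ score. First, I would write $\textbf{Regret}_T = \sum_{i=1}^N \int_0^T f_i(t)\,dt$ with $f_i(t) := z(x_i^*) - z(x_i^*(t))$. Let $0 = \tau_0 < \tau_1 < \cdots < \tau_K \leq T$ be the instants at which observations complete; since $x_i^*(t)$ is piecewise constant between consecutive events, the integral reduces to $\sum_k (\tau_{k+1} - \tau_k) \sum_i f_i(\tau_k)$, converting a continuous time-integrated regret into a weighted sum of per-event instantaneous gaps.

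For each user $i$ whose optimum is not yet observed at time $\tau_k$, I would use Assumption~\ref{assumption} together with the closed-form EI formula in Lemma~\ref{lemma1} to derive the standard GP-EI-style inequality $f_i(\tau_k) \lesssim \textbf{EI}_{i,\tau_k}(x_i^*) + R\,\sigma_{\tau_k}(x_i^*)$, following the analysis template of \citet{EI2011}. Since $\textbf{EI}_{\tau_k}(x_i^*) \geq \textbf{EI}_{i,\tau_k}(x_i^*)$ (summing over users only enlarges the aggregate EI), and the algorithm's selection rule gives $\textbf{EI}_{\tau_k}(x_{k+1})/c(x_{k+1}) \geq \textbf{EI}_{\tau_k}(x_i^*)/c(x_i^*)$ for every unobserved $x_i^*$, this transfers the per-user regret into the selected model's EI, scaled by the cost ratio $c(x_i^*)/c(x_{k+1})$. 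Using Assumption~\ref{assumption} once more to obtain $\textbf{EI}_{\tau_k}(x) \leq N(R+1)\sigma_{\tau_k}(x)$ (Lemma~\ref{lemma1} summed over $N$ users), I expect the per-event bound $\sum_i f_i(\tau_k) \lesssim \frac{N\,\sigma_{\tau_k}(x_{k+1})}{c(x_{k+1})} \sum_i c(x_i^*) + R\sum_i \sigma_{\tau_k}(x_i^*)$.

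The remaining step is to sum over events and bound the cumulative $\sigma$ terms. In the sequential, single-device setting, unrolling $\sigma_{\tau_k}^2(x_{k+1}) = \det(K_{S_{k+1}})/\det(K_{S_k})$ where $S_k$ is the observed index set at time $\tau_k$, together with the definition of $\MIF_s$, yields $\sum_k \sigma_{\tau_k}(x_{k+1}) \leq \text{\bf MIU}(T,K)$. Pairing the wall-clock weights $(\tau_{k+1}-\tau_k)$ with $\sum_i c(x_i^*) = N\bar c$ extracts the $N^2 \bar c$ factor, while the $1/M$ factor emerges from the global accounting $\sum_k c(x_k) \leq MT$ versus $\sum_k (\tau_{k+1}-\tau_k) = T$, reflecting the $M$-fold parallel throughput of the device pool.

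The main obstacle will be handling the parallel execution carefully: at each decision time, up to $M-1$ models may be mid-run and their observations not yet incorporated into the posterior, so the $\sigma_{\tau_k}$ used at decision time is based on a strict subset of soon-to-arrive knowledge. Absorbing this staleness into the $\sigma$-to-$\text{\bf MIU}$ telescoping, so that the $M-1$ ``missing'' updates at each step contribute to the additive $+M$ correction rather than corrupting the main $\text{\bf MIU}(T,K)$ term, is the delicate part and is the place where the bound $\sigma \leq 1$ from Assumption~\ref{assumption} gets invoked. A secondary subtlety is correctly aligning the time-interval weights $(\tau_{k+1}-\tau_k)$ with the cost ratios $c(x_i^*)/c(x_{k+1})$ arising from the EIrate rule, so that the aggregate collapses cleanly into the average $\bar c$ rather than some larger worst-case cost such as $\max_i c(x_i^*)$.
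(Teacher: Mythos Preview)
Your outline has the right high-level shape (lower-bound $\textbf{EI}$ at $x_i^*$, transfer via the EIrate rule to the selected arm, upper-bound $\textbf{EI}$ there, sum the posterior standard deviations into $\text{\bf MIU}$), but the upper-bound step contains a genuine gap. The claim $\textbf{EI}_{\tau_k}(x)\le N(R+1)\sigma_{\tau_k}(x)$ is false: Lemma~\ref{lemma1} together with Assumption~\ref{assumption} only yields
\[
\textbf{EI}_{i,t}(x)\ \le\ \bigl(z(x)-z(x_i^*(t))\bigr)^+\ +\ (R+1)\,\sigma_t(x),
\]
and the improvement term $(z(x)-z(x_i^*(t)))^+$ does not vanish for the selected arm (the algorithm is precisely trying to make it large). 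Dropping it invalidates the per-event bound you wrote. The paper keeps this term and removes it by a \emph{telescoping} argument that is absent from your plan: along the sequence $x_j^{(1)},x_j^{(2)},\ldots$ of models run on a fixed device $j$, once $x_j^{(k)}$ finishes we have $z(x_i^*(\hat t(x_j^{(k+1)})))\ge\max\{z(x_j^{(k)}),z(x_i^*(\hat t(x_j^{(k)})))\}$, so $\sum_k (z(x_j^{(k)})-z(x_i^*(\hat t(x_j^{(k)}))))^+$ collapses to at most $C_R$ per user.

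This telescoping is tied to a different decomposition than yours. Instead of partitioning $[0,T]$ at completion events, the paper introduces for each device $j$ a step function $f_j(t)$ equal to the EIrate-style quantity of the model currently running on $j$, proves $z(x_i^*)-z(x_i^*(t))\lesssim c(x_i^*)\,f_j(t)$ for every $j$, and integrates. Because $f_j$ is constant on an interval of length exactly $c(\hat x_j(t))$ and carries a $1/c(\hat x_j(t))$ factor, the time weights cancel \emph{identically}; this is what makes the ``secondary subtlety'' you flagged disappear, and it is not clear your completion-time weights $(\tau_{k+1}-\tau_k)$ can be aligned with $c(x_{k+1})$ without this device-by-device view. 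The $1/M$ factor then comes from averaging the inequality over the $M$ devices, not from a throughput comparison $\sum_k c(x_k)\le MT$. Finally, the paper uses the \emph{multiplicative} lower bound $\textbf{EI}_{i,t}(x_i^*)\ge\frac{\tau(-R)}{\tau(R)}(z(x_i^*)-z(x_i^*(t)))^+$ rather than your additive form, which is why no residual $\sigma_{\tau_k}(x_i^*)$ term appears on the unselected side; the extra $R\sum_i\sigma_{\tau_k}(x_i^*)$ in your bound would not feed into $\text{\bf MIU}$ and has no obvious control.
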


The proof of Theorem \ref{main} can be found in the Supplemental Materials. To the best of our knowledge, this is the first bound for time sensitive regret. 
We offer the following general observations:

\begin{itemize}[fullwidth]
\item ({\bf convergence to optimum}) If the growth of $\text{\bf MIU}(T,K)$ with respect to $T$ is $o(T)$, then average regret converges to zero, that is, 
\[
{1\over T}\text{\bf Regret}_T \rightarrow 0.
\]
In other words, the service provider will find the optimal model for each individual user. 
\item ({\bf nearly linear speedup}) When more and more devices are employed, that is, increasing $M$, then the regret will decrease roughly by a factor $M$ as long as $M$ is dominated by $\text{\bf MIU}(T,K)$.
\end{itemize}

\vspace{-0.5em}
\paragraph{Convergence Rate of the Average Regret.}
We consider the scenario where $M\ll \text{\bf MIU}(T,K)$ and $|\mathcal{L}(t)|$ increases linearly with respect to $T$. Then the growth of $\text{\bf MIU}(T,K)$ will dominate the convergence rate of the average regret. Note that $\text{\bf MIU}(T,K)$ is bounded by
\[
\text{\bf MIU}(T,K) \leq \sum_{i \in \text{top $|\mathcal{L}(t)|$ elements in $\text{diag}(K)$}}\sqrt{K(i,i)}.
\]

\vspace{-0.5em}
\paragraph{Discussion: Special Cases.} 
Consider the following special cases:
\begin{itemize}[fullwidth]
\item ({\bf $O(1/T)$ rate}) The convergence rate for ${1\over T}\text{\bf Regret}_T$ achieves $O(1/T)$, if $\text{\bf MIU}(T,K)$ is bounded, for example, all random variables (models) are linearly combination of a finite number of hidden Gaussian random variables.
\item ({\bf not converge}) 
If all models are independent, then $K$ is a diagonal matrix, and \MIF$_s(K)$ is a constant, which means $\text{\bf MIU}(T,K)$ is linearly increased of $T$. In such a case, the regret is of order $T$, which implies no convergence for the average regret. This is plausible in that the algorithm gains no information from previous observations to decide next because of independence. 
\item ({\bf $O(1/T^{(1-\alpha)})$ rate with $\alpha\in (0,1)$}) This rate can be achieved if $\text{\bf MIU}(T,K) $ grows with the rate $O(T^{\alpha})$.
\end{itemize}

\begin{figure*}[t!]
\centering
\includegraphics[width=\textwidth]{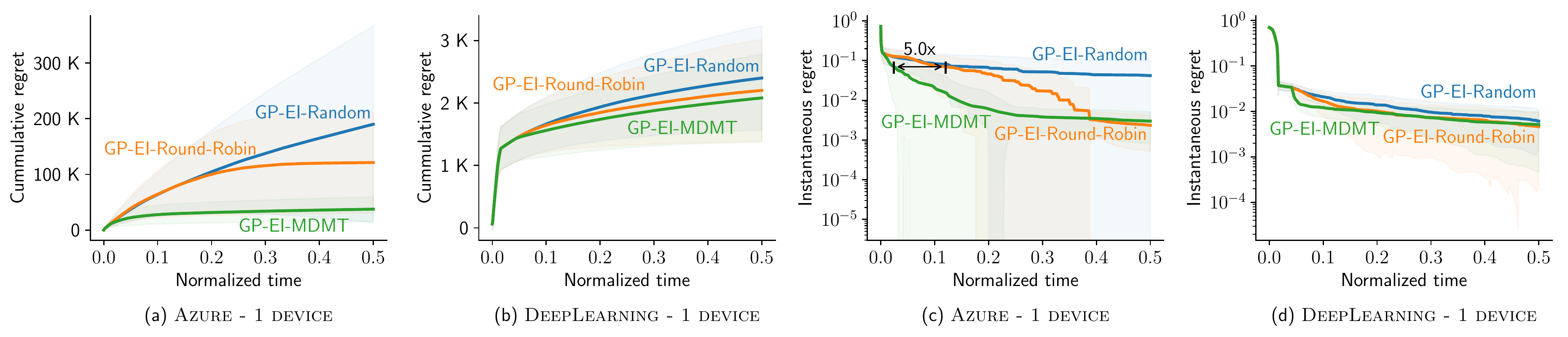}
\vspace{-0.75em}
\caption{Performance of Different Model Selection Algorithms with a Single Computation Device.}
\label{fig:compare-strategies-1-device}
\end{figure*}

\begin{figure*}[t!]
\centering
\includegraphics[width=\textwidth]{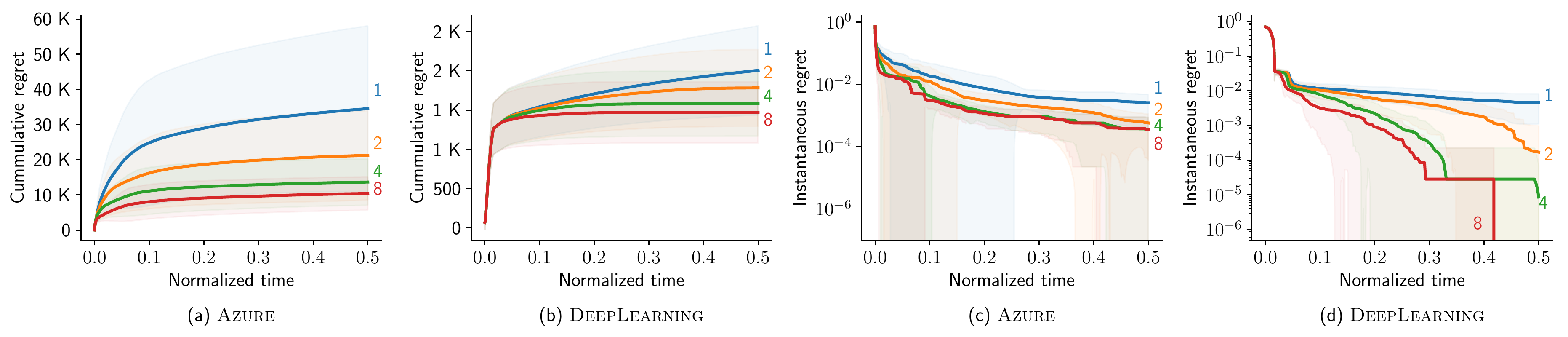}
\vspace{-0.75em}
\caption{The Impact of Multiple Devices on Our Approach.}
\label{fig:compare-devices}
\end{figure*}

\begin{figure*}[t!]
\centering
\includegraphics[width=\textwidth]{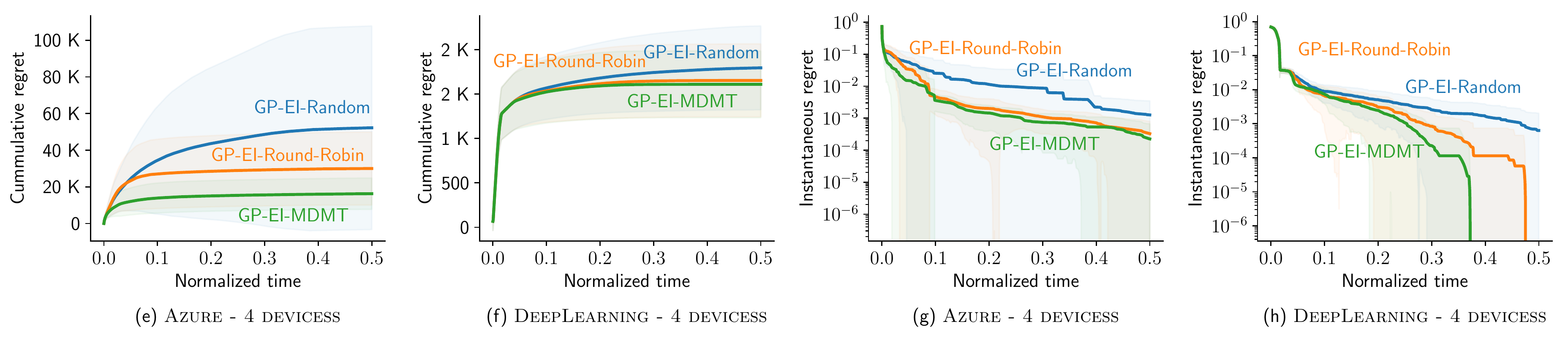}
\vspace{-0.75em}
\caption{Performance of Different Model Selection Algorithms with Four Computation Devices.}
\label{fig:compare-strategies-multi-device}
\end{figure*}

\begin{figure}
	\centering
	\includegraphics[width=0.5\textwidth]{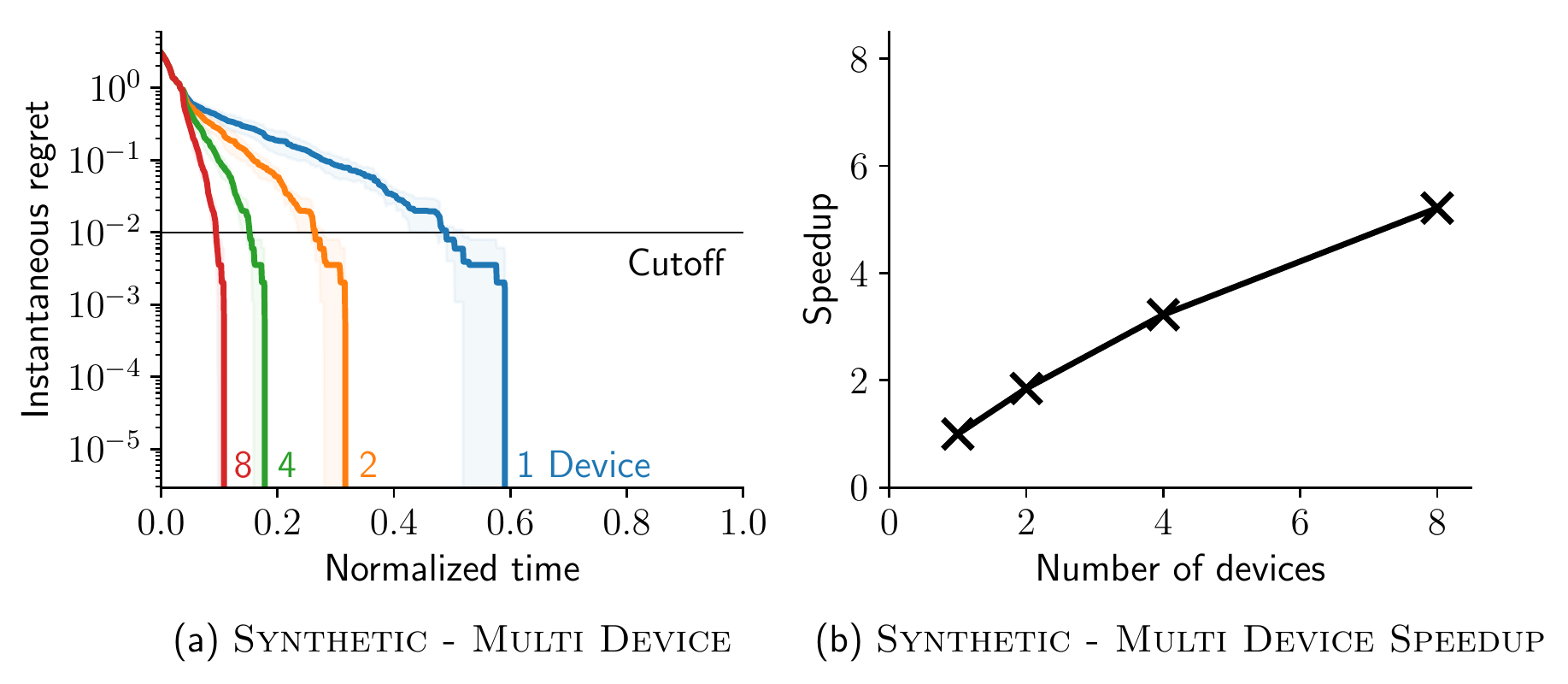}
	\caption{Speedup of using Multiple Devices for Our Approach on Synthetic Data} \label{fig:synthetic-speedup}
\end{figure}

\vspace{-0.5em}
\section {Experiments}

\vspace{-1em}
We validate the effectiveness of the multi-device, multi-tenant 
GP-EI algorithm.

\subsection{Data Sets and Protocol}

\vspace{-0.5em}
We use two datasets for our experiments, namely (1) \textsc{DeepLearning} 
and (2) \textsc{Azure}. Both datasets are from the ease.ml paper~\cite{system}
in which the authors evaluate their single-device, multi-tenant GP-UCB
algorithm. 
The \textsc{DeepLearning} dataset is collected from 22 users, each runs
an image classification task. The system needs to select from 8 deep learning
models, including NIN, GoogLeNet, ResNet-50, AlexNet, BNAlexNet,
ResNet-18, VGG-16, and SqueezeNet.
The \textsc{Azure} dataset is collected from 17 users, each runs
a Kaggle competition. The system needs to select from 8 binary classifiers,
including Averaged Perceptron, Bayes Point Machine, Boosted Decision Tree, Decision Forests,
Decision Jungle, Logistic Regression, Neural Network, and SVM.

\vspace{-0.5em}
\paragraph*{Protocol}
We run all experiments with the following protocol. 
In each run we randomly select 8 users which we will 
isolate and use to estimate the mean and the covariance 
matrix of the prior for the Gaussian process. We test
different model selection algorithms using the 
remaining users. 

For all model selection strategies, we warm start by
training two fastest models for each user, and then
switching to each specific automatic model selection
algorithm:

\begin{itemize}
  \item {\bf GP-EI-Random:} Each user runs their own GP-EI
  model selection algorithm; the system chooses
  the next user to serve uniformly at random and trains one
  model for the selected user;
  \item {\bf GP-EI-Round-Robin:} Each user runs their own
  GP-EI model selection algorithm; the system picks
  the next user to serve in a round robin manner;
  \item {\bf GP-EI-MDMT:} Our proposed approach in which
  each user runs their own
  GP-EI model selection algorithm; the system picks 
  the next user to serve using the MM-GP-EI algorithm we proposed.
\end{itemize}


\vspace{-0.5em}
\paragraph{Metrics}

We measure the performance of a model selection system in
two ways: (1) Cumulative Regret $\textbf{Regret}_T$
and (2) Instantaneous Regret: at time $T$, we calculate the
  average among all users of the difference between the best possible
  accuracy for each user and the current best accuracy the user
  gets. Intuitively, this measures the global ``unhappiness''
  among all users at time $T$.



\vspace{-0.5em}
\subsection{Single device experiments}
\vspace{-0.5em}
We validate the hypothesis that, given a single device, multi-tenant GP-EI 
outperforms both round robin and random strategies for picking the
next user to serve. 
 
\autoref{fig:compare-strategies-1-device} shows the result on
both datasets. The colored region around the curve shows the 1$\sigma$ confidence interval. On \textsc{Azure}, our approach outperforms 
both round robin and random significantly --- we reach the
same instantaneous regret up to 5$\times$ faster than round robin.
This is because, by prioritizing different users with respect to their
expected improvement, the global happiness of all users can 
increase faster than treating all users equally. On the other hand,
for \textsc{DeepLearning}, we do not observe a significant speedup
for our approach. This is because the first two trials of models
already give a reasonable quality. If we measure the 
standard deviation of the accuracy of models for each user, the average for \textsc{Azure} is $0.12$ while for \textsc{DeepLearning} it is $0.04$.
This means that, once the system
trains the first two initial models for each user in \textsc{Azure}, 
there could be more potential performance gain still undiscovered among models 
that have not yet been sampled.


\vspace{-0.5em}
\subsection{Multiple device experiments}
\vspace{-0.5em}
We validate the hypothesis that using multiple devices speeds up 
our multi-device, multi-tenant model selection algorithm. 
\autoref{fig:compare-devices} shows the result of using multiple
devices for our algorithm. We see that
the more devices we use, the faster the instantaneous regret drops.
In terms of speedup, since \textsc{DeepLearning} has
more users than \textsc{Azure} (14 vs. 9), we see that 
the speedup is larger on \textsc{DeepLearning}. The significant speedup of reaching instantaneous regret of $0.03$ for \textsc{Azure} is most probably due to the small number of users compared to the number of devices (9 vs. 8).

We now compare our approach against GP-EI-Round-Robin and 
GP-EI-Random when there
are multiple devices available. \autoref{fig:compare-strategies-multi-device} 
shows the result. We see that, up to 4 devices (9 users in total),
our approach outperforms round robin significantly on \textsc{Azure}. 
When we use 8 devices for Azure,
because there are only 9 users, both our approach and round robin achieve
almost the same performance.

We also conduct an experiment using 
a synthetic dataset with $50$ users and $50$ models (\autoref{fig:synthetic-speedup}). We model the performance as a Gaussian Process and generate random samples independently for each user. The Gaussian Process has zero mean and a covariance matrix derived from the Mat\'ern kernel with $\nu=5/2$. Each generated sample is upwards in order to be non-negative. We run our approach on the same dataset while varying the number of devices. For each device count we repeat the experiment $5$ times. To quantify speed gains we measure the average time it takes the instantaneous regret to hit a cutoff point of $0.01$. We can observe that adding more devices makes the convergence time drop at a near-linear rate.

\vspace{-0.5em}
\section{Conclusion}

\vspace{-1em}
In this paper, we introduced a novel multi-device, multi-tenant algorithm using GP-EI to maximize the ``global happiness'' for all users, who share the same set of computing resources.
We formulated the ``global happiness'' in terms of a cumulative regret and first time provided a theoretical upper bound for the time sensitive regret in the GP-EI framework. We evaluated our algorithm on two real-world datasets, which significantly outperforms the standard GP-EI serving users randomly or in a round robin fashion. Both our theoretical results and experiments show that our algorithm can provide near linear speedups when multiple devices are available.


\bibliography{main}

\begin{thebibliography}{37}
\providecommand{\natexlab}[1]{#1}
\providecommand{\url}[1]{\texttt{#1}}
\expandafter\ifx\csname urlstyle\endcsname\relax
  \providecommand{\doi}[1]{doi: #1}\else
  \providecommand{\doi}{doi: \begingroup \urlstyle{rm}\Url}\fi

\bibitem[Arora et~al.(2012)Arora, Dekel, and Tewari]{2012optimization}
R.~Arora, O.~Dekel, and A.~Tewari.
\newblock Online bandit learning against an adaptive adversary: from regret to
  policy regret.
\newblock In \emph{Proceedings of the 29th International Coference on
  International Conference on Machine Learning}, pages 1747--1754, 2012.

\bibitem[Audibert and Bubeck(2009)]{moss}
J.-Y. Audibert and S.~Bubeck.
\newblock Minimax policies for adversarial and stochastic bandits.
\newblock In \emph{Conference on Learning Theory}, pages 217--226, 2009.

\bibitem[Auer et~al.(2002)Auer, Cesa-Bianchi, and Fischer]{ucb}
P.~Auer, N.~Cesa-Bianchi, and P.~Fischer.
\newblock Finite-time analysis of the multiarmed bandit problem.
\newblock \emph{Machine learning}, 47\penalty0 (2-3):\penalty0 235--256, 2002.

\bibitem[Brochu et~al.(2010)Brochu, Cora, and De~Freitas]{tutorial}
E.~Brochu, V.~M. Cora, and N.~De~Freitas.
\newblock A tutorial on bayesian optimization of expensive cost functions, with
  application to active user modeling and hierarchical reinforcement learning.
\newblock \emph{arXiv preprint arXiv:1012.2599}, 2010.

\bibitem[Bubeck et~al.(2012)Bubeck, Cesa-Bianchi, et~al.]{budeck}
S.~Bubeck, N.~Cesa-Bianchi, et~al.
\newblock Regret analysis of stochastic and nonstochastic multi-armed bandit
  problems.
\newblock \emph{Foundations and Trends{\textregistered} in Machine Learning},
  5\penalty0 (1):\penalty0 1--122, 2012.

\bibitem[Bull(2011)]{EI2011}
A.~D. Bull.
\newblock Convergence rates of efficient global optimization algorithms.
\newblock \emph{Journal of Machine Learning Research}, 12\penalty0
  (Oct):\penalty0 2879--2904, 2011.

\bibitem[Daxberger and Low(2017)]{DBGPUCB}
E.~A. Daxberger and B.~K.~H. Low.
\newblock Distributed batch {G}aussian process optimization.
\newblock In D.~Precup and Y.~W. Teh, editors, \emph{Proceedings of the 34th
  International Conference on Machine Learning}, volume~70 of \emph{Proceedings
  of Machine Learning Research}, pages 951--960, International Convention
  Centre, Sydney, Australia, 06--11 Aug 2017. PMLR.

\bibitem[de~Freitas et~al.(2012)de~Freitas, Smola, and Zoghi]{GPBB}
N.~de~Freitas, A.~Smola, and M.~Zoghi.
\newblock Regret bounds for deterministic gaussian process bandits.
\newblock \emph{arXiv preprint arXiv:1203.2177}, 2012.

\bibitem[Deshmukh et~al.(2017)Deshmukh, Dogan, and Scott]{contextual2017}
A.~A. Deshmukh, U.~Dogan, and C.~Scott.
\newblock Multi-task learning for contextual bandits.
\newblock In \emph{Advances in Neural Information Processing Systems}, pages
  4851--4859, 2017.

\bibitem[Feurer et~al.()Feurer, Klein, Eggensperger, Springenberg, Blum, and
  Hutter]{Feurer2015}
M.~Feurer, A.~Klein, K.~Eggensperger, J.~Springenberg, M.~Blum, and F.~Hutter.
\newblock {Efficient and Robust Automated Machine Learning}.
\newblock In \emph{NIPS}, pages 2962--2970.

\bibitem[Gabillon et~al.(2012)Gabillon, Ghavamzadeh, and Lazaric]{ugape}
V.~Gabillon, M.~Ghavamzadeh, and A.~Lazaric.
\newblock Best arm identification: A unified approach to fixed budget and fixed
  confidence.
\newblock In \emph{Advances in Neural Information Processing Systems}, pages
  3212--3220, 2012.

\bibitem[Golovin et~al.()Golovin, Solnik, Moitra, Kochanski, Karro, and
  Sculley]{Golovin2017}
D.~Golovin, B.~Solnik, S.~Moitra, G.~Kochanski, J.~E. Karro, and D.~Sculley.
\newblock {Google Vizier: A Service for Black-Box Optimization}.
\newblock In \emph{KDD}.

\bibitem[Golovin et~al.(2017)Golovin, Solnik, Moitra, Kochanski, Karro, and
  Sculley]{GoogleVizier}
D.~Golovin, B.~Solnik, S.~Moitra, G.~Kochanski, J.~Karro, and D.~Sculley.
\newblock Google vizier: A service for black-box optimization.
\newblock In \emph{Proceedings of the 23rd ACM SIGKDD International Conference
  on Knowledge Discovery and Data Mining}, pages 1487--1495. ACM, 2017.

\bibitem[GPyOpt(2016)]{gpyopt2016}
GPyOpt.
\newblock {{\{}GPyOpt{\}}: A Bayesian Optimization framework in python}.
\newblock $\backslash$url{\{}http://github.com/SheffieldML/GPyOpt{\}}, 2016.

\bibitem[Hazan et~al.(2016)]{introonline}
E.~Hazan et~al.
\newblock Introduction to online convex optimization.
\newblock \emph{Foundations and Trends{\textregistered} in Optimization},
  2\penalty0 (3-4):\penalty0 157--325, 2016.

\bibitem[Huang et~al.(2017)Huang, Ajallooeian, Szepesv{\'a}ri, and
  M{\"u}ller]{structure}
R.~Huang, M.~M. Ajallooeian, C.~Szepesv{\'a}ri, and M.~M{\"u}ller.
\newblock Structured best arm identification with fixed confidence.
\newblock In \emph{International Conference on Algorithmic Learning Theory},
  pages 593--616, 2017.

\bibitem[J.~Mockus and Zilinskas(1978)]{EIstart}
V.~T. J.~Mockus and A.~Zilinskas.
\newblock \emph{Toward Global Optimization}, volume~2, chapter The application
  of Bayesian methods for seeking the extremum, pages 117--128.
\newblock Elsevier, 1978.

\bibitem[Jones et~al.(1998)Jones, Schonlau, and Welch]{analytically}
D.~R. Jones, M.~Schonlau, and W.~J. Welch.
\newblock Efficient global optimization of expensive black-box functions.
\newblock \emph{Journal of Global optimization}, 13\penalty0 (4):\penalty0
  455--492, 1998.

\bibitem[Kalyanakrishnan et~al.(2012)Kalyanakrishnan, Tewari, Auer, and
  Stone]{LUCB}
S.~Kalyanakrishnan, A.~Tewari, P.~Auer, and P.~Stone.
\newblock Pac subset selection in stochastic multi-armed bandits.
\newblock In \emph{International Conference on Machine Learning}, volume~12,
  pages 655--662, 2012.

\bibitem[Kandasamy et~al.(2016)Kandasamy, Dasarathy, Oliva, Schneider, and
  Poczos]{MFGPUCB}
K.~Kandasamy, G.~Dasarathy, J.~B. Oliva, J.~Schneider, and B.~Poczos.
\newblock Gaussian process bandit optimisation with multi-fidelity evaluations.
\newblock In D.~D. Lee, M.~Sugiyama, U.~V. Luxburg, I.~Guyon, and R.~Garnett,
  editors, \emph{Advances in Neural Information Processing Systems 29}, pages
  992--1000. Curran Associates, Inc., 2016.

\bibitem[Kandasamy et~al.(2017)Kandasamy, Dasarathy, Schneider, and
  P{\'o}czos]{BOCA}
K.~Kandasamy, G.~Dasarathy, J.~Schneider, and B.~P{\'o}czos.
\newblock Multi-fidelity {B}ayesian optimisation with continuous
  approximations.
\newblock In D.~Precup and Y.~W. Teh, editors, \emph{Proceedings of the 34th
  International Conference on Machine Learning}, volume~70 of \emph{Proceedings
  of Machine Learning Research}, pages 1799--1808, International Convention
  Centre, Sydney, Australia, 06--11 Aug 2017. PMLR.

\bibitem[Kandasamy et~al.(2018)Kandasamy, Krishnamurthy, Schneider, and
  P{\'o}czos]{parallelthompson}
K.~Kandasamy, A.~Krishnamurthy, J.~Schneider, and B.~P{\'o}czos.
\newblock Parallelised bayesian optimisation via thompson sampling.
\newblock In \emph{International Conference on Artificial Intelligence and
  Statistics}, pages 133--142, 2018.

\bibitem[Kathuria et~al.(2016)Kathuria, Deshpande, and Kohli]{EST}
T.~Kathuria, A.~Deshpande, and P.~Kohli.
\newblock Batched gaussian process bandit optimization via determinantal point
  processes.
\newblock In D.~D. Lee, M.~Sugiyama, U.~V. Luxburg, I.~Guyon, and R.~Garnett,
  editors, \emph{Advances in Neural Information Processing Systems 29}, pages
  4206--4214. Curran Associates, Inc., 2016.

\bibitem[Kaufmann and Koolen(2017)]{bestarm}
E.~Kaufmann and W.~M. Koolen.
\newblock Monte-carlo tree search by best arm identification.
\newblock In \emph{Advances in Neural Information Processing Systems}, pages
  4904--4913, 2017.

\bibitem[Kocsis et~al.(2006)Kocsis, Szepesv{\'a}ri, and Willemson]{UCT}
L.~Kocsis, C.~Szepesv{\'a}ri, and J.~Willemson.
\newblock Improved monte-carlo search.
\newblock \emph{Univ. Tartu, Estonia, Tech. Rep}, 1, 2006.

\bibitem[Kotthoff et~al.()Kotthoff, Thornton, Hoos, Hutter, and
  Leyton-Brown]{Kotthoff2017}
L.~Kotthoff, C.~Thornton, H.~H. Hoos, F.~Hutter, and K.~Leyton-Brown.
\newblock {Auto-WEKA 2.0: Automatic model selection and hyperparameter
  optimization in WEKA}.
\newblock \emph{Journal of Machine Learning Research}, \penalty0 (25):\penalty0
  1--5.

\bibitem[Lai and Robbins(1985)]{lowerbound}
T.~L. Lai and H.~Robbins.
\newblock Asymptotically efficient adaptive allocation rules.
\newblock \emph{Advances in applied mathematics}, 6\penalty0 (1):\penalty0
  4--22, 1985.

\bibitem[Langford and Zhang(2008)]{zhangcontextual}
J.~Langford and T.~Zhang.
\newblock The epoch-greedy algorithm for multi-armed bandits with side
  information.
\newblock In \emph{Advances in neural information processing systems}, pages
  817--824, 2008.

\bibitem[Li et~al.(2018)Li, Zhong, Liu, Wu, and Zhang]{system}
T.~Li, J.~Zhong, J.~Liu, W.~Wu, and C.~Zhang.
\newblock Ease.ml: Towards multi-tenant resource sharing for machine learning
  workloads.
\newblock \emph{The Proceedings of the Very Large Database Endowment}, 2018.

\bibitem[Malkomes et~al.(2016)Malkomes, Schaff, and Garnett]{EIMS}
G.~Malkomes, C.~Schaff, and R.~Garnett.
\newblock Bayesian optimization for automated model selection.
\newblock In D.~D. Lee, M.~Sugiyama, U.~V. Luxburg, I.~Guyon, and R.~Garnett,
  editors, \emph{Advances in Neural Information Processing Systems 29}, pages
  2900--2908. Curran Associates, Inc., 2016.

\bibitem[Rana et~al.(2017)Rana, Li, Gupta, Nguyen, and Venkatesh]{EGP}
S.~Rana, C.~Li, S.~Gupta, V.~Nguyen, and S.~Venkatesh.
\newblock High dimensional bayesian optimization with elastic gaussian process.
\newblock In \emph{International Conference on Machine Learning}, pages
  2883--2891, 2017.

\bibitem[Ryzhov(2016)]{EI2016}
I.~O. Ryzhov.
\newblock On the convergence rates of expected improvement methods.
\newblock \emph{Operations Research}, 64\penalty0 (6):\penalty0 1515--1528,
  2016.

\bibitem[Snoek et~al.(2012)Snoek, Larochelle, and Adams]{Spearmint}
J.~Snoek, H.~Larochelle, and R.~P. Adams.
\newblock Practical bayesian optimization of machine learning algorithms.
\newblock In \emph{Advances in neural information processing systems}, pages
  2951--2959, 2012.

\bibitem[Sparks et~al.()Sparks, Talwalkar, Haas, Franklin, Jordan, and
  Kraska]{Sparks2015}
E.~R. Sparks, A.~Talwalkar, D.~Haas, M.~J. Franklin, M.~I. Jordan, and
  T.~Kraska.
\newblock {Automating model search for large scale machine learning}.
\newblock In \emph{Proceedings of the Sixth ACM Symposium on Cloud Computing -
  SoCC '15}, pages 368--380, New York, New York, USA. ACM Press.
\newblock ISBN 9781450336512.
\newblock \doi{10.1145/2806777.2806945}.

\bibitem[Srinivas et~al.(2012)Srinivas, Krause, Kakade, and Seeger]{GPUCB}
N.~Srinivas, A.~Krause, S.~M. Kakade, and M.~W. Seeger.
\newblock Information-theoretic regret bounds for gaussian process optimization
  in the bandit setting.
\newblock \emph{IEEE Transactions on Information Theory}, 58\penalty0
  (5):\penalty0 3250--3265, 2012.

\bibitem[Thornton et~al.()Thornton, Hutter, Hoos, and
  Leyton-Brown]{Thornton2013}
C.~Thornton, F.~Hutter, H.~H. Hoos, and K.~Leyton-Brown.
\newblock {Auto-WEKA}.
\newblock In \emph{Proceedings of the 19th ACM SIGKDD international conference
  on Knowledge discovery and data mining - KDD '13}, page 847, New York, New
  York, USA. ACM Press.
\newblock ISBN 9781450321747.
\newblock \doi{10.1145/2487575.2487629}.

\bibitem[Zhong et~al.(2017)Zhong, Huang, and Liu]{parallel2017}
J.~Zhong, Y.~Huang, and J.~Liu.
\newblock Asynchronous parallel empirical variance guided algorithms for the
  thresholding bandit problem.
\newblock \emph{arXiv preprint arXiv:1704.04567}, 2017.

\end{thebibliography}
\bibliographystyle{abbrvnat}

\newpage
\onecolumn
\begin{center}
{\bf \Large Supplemental Materials}
\end{center}

\appendix
\section{Gaussian Process and Posterior Formulas}
In this section we give the posterior formulas of Gaussian Process. Most formulas in this section come from \citet{GPUCB}, but we modify them to fit our settings.

In \textbf{Section 4.2}, we choose the Gaussian Process $GP(\mu(x), k(x,x'))$ as the prior of $z(x)$, and point out that at any time $t$, the posterior of $z(x)$ given $\{z(x)\}_{x\in \mathcal{L}_t}$ is also a Gaussian Process $GP(\mu_t(x), k_t(x,x'))$. Now we give the formulas to compute $\mu_t(x)$ and $k_t(x,x')$.

Suppose at time $t$, the observed models are $x^{(1)}, x^{(2)}, \cdots, x^{(|\mathcal{L}(t)|)}$, then we have the following:
\begin{equation*}
	\mu_t(x) = \boldsymbol{v}_t(x)^\top\boldsymbol{K}_t^{-1}(\boldsymbol{z}_t-\boldsymbol{w}_t)+\mu(x),
\end{equation*}
\begin{equation} \label{v}
	k_t(x,x') = k(x,x')-\boldsymbol{v}_t(x)^{\textrm{T}}\boldsymbol{K}_t^{-1}\boldsymbol{v}_t(x'),
\end{equation}	
where 
\begin{align*}
\boldsymbol{v}_t(x) := & \big[k(x^{(1)},x), k(x^{(2)},x),\cdots, k(x^{(|\mathcal{L}(t)|)},x)\big]^\top, 
\\
\boldsymbol{K}_t := & \big[k(x^{(i)},x^{(j)})\big]_{i,j\in \{1,2,\cdots, |\mathcal{L}(t)|\}},
\\
\boldsymbol{z}_t := & \big[z^{(1)}, z^{(2)}, \cdots, z^{(|\mathcal{L}(t)|)}\big],
\\
\boldsymbol{w}_t := & \big[\mu(x^{(1)}), \mu(x^{(2)}), \cdots, \mu(x^{(|\mathcal{L}(t)|)})\big].
\end{align*}

In this supplemental material, we give the proof of our main theorem. Before that, let us show Lemma \ref{lemma1}.

\section{Proof of Lemma \ref{lemma1}}
\begin{proof}
	First, because $\tau'(x) = \Phi(x)$ and $\tau(-\infty) = 0$, we have $\tau(x) = \int_{-\infty}^x \Phi(t)\text{d}t$. 
	Define $Y = \frac{X-\mu}{\sigma}$, then $X=\sigma Y + \mu$ and $Y\sim \cN(0,1)$.
	Let $S(x) = 1-\Phi(x)$, for any $b\in \R$, we have
	\begin{equation*}
	\begin{split}
		\E\Big[\max\big\{(Y-b),0\big\}\Big] &= \int_b^{+\infty} (y-b)\phi(y)\text{d}y \\
		&= \int_b^{+\infty}(y-b)\text{d}\Big(-S(y)\Big)\\
		&= -(y-b)S(y)\Big\vert_b^{+\infty} + \int_b^{+\infty}S(y)\text{d}(y-b)\\
		&= \int_b^{+\infty}S(y)\text{d}y= \int_b^{+\infty}\Big(1-\Phi(y)\Big)\text{d}y= \int_b^{+\infty}\Phi(-y)\text{d}y\\
		&=\int_{-\infty}^{-b}\Phi(y)\text{d}y\\
		&=\tau(-b).
	\end{split}
	\end{equation*}
	
	Applying this into $X$, we have
	\begin{equation*}
	\begin{split}
		\E\Big[\max\big\{X-a,0\big\}\Big] &=\E\Big[\max\big\{\sigma Y+\mu-a,0\big\}\Big]\\
		&=\sigma\E\Big[\max\big\{Y-\frac{a-\mu}{\sigma},0\big\}\Big]\\
		&=\sigma\tau\Big(\frac{\mu-a}{\sigma}\Big).
	\end{split}
	\end{equation*}	

\end{proof}

\section{Proof of Theorem \ref{main}}
We will prove the main theorem by the following steps:
\begin{itemize}
	\item We will give a bound for \textbf{EI} in Lemma \ref{lemma2};
	\item Using the bound of \textbf{EI}, we will bound \textbf{Regret} by the sum of variance of posterior distribution. This will be presented in Lemma \ref{lem: variance}.
	\item Using Lemma \ref{lem: 4}, we will convert the sum of variance into \MIF, and then prove the main theorem.
\end{itemize}

Let $\hat{t}(x)$ denote the time when model $x$ began to be tested. Since every model will just be tested once, this is well-defined. 
Define $\hat{\sigma}(x) = \sigma_{\hat{t}(x)}(x)$. And let $x_{\text{test}}^{(1)}$, $x_{\text{test}}^{(2)}, \cdots, x_{\text{test}}^{(\hat{p}(T))}$ denote all models that have been tested or are being tested up to time $T$ by the order of their start-testing time.

It follows from Assumption 1 that we have $|z(x)-\mu_0(x)|\le \sigma_0(x)R$. So, the reward of all models has a universal bound $C_R$, which means $|z(x)|\le C_R$ for every model $x$.

\begin{lemma} \label{lemma2}
	Suppose Assumption \ref{assumption} holds. For each user $i$, let x be a model belonging to user $i$. Let $\Big(z(x)-z\big(x_i^*(t)\big)\Big)^+=\max\big\{z(x)-z\big(x_i^*(t)\big),0\big\}$. Then for every $t\ge 0$, we have
	\begin{equation*}
		\frac{\tau(-R)}{\tau(R)}\Big(z(x)-z\big(x_i^*(t)\big)\Big)^+\le \textbf{EI}_{i,t}(x)\le \Big(z(x)-z\big(x_i^*(t)\big)\Big)^++(R+1)\sigma_t(x).
	\end{equation*}
\end{lemma}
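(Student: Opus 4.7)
The plan is to substitute the closed-form expression for $\textbf{EI}_{i,t}(x)$ provided by Lemma \ref{lemma1}, and then bracket it between two scalar inequalities driven by Assumption \ref{assumption}. Applying Lemma \ref{lemma1} with $X = z(x)$ drawn from its posterior $\mathcal{N}(\mu_t(x), \sigma_t(x)^2)$ and $a = z(x_i^*(t))$ yields $\textbf{EI}_{i,t}(x) = \sigma_t(x)\,\tau(u)$, where $u := (\mu_t(x) - z(x_i^*(t)))/\sigma_t(x)$. Writing also $v := (z(x) - z(x_i^*(t)))/\sigma_t(x)$, Assumption \ref{assumption} gives $|u - v| \le R$. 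Because $\tau'(y) = \Phi(y) \ge 0$, the function $\tau$ is non-decreasing, so we immediately obtain the sandwich
\[
\sigma_t(x)\,\tau(v - R) \;\le\; \textbf{EI}_{i,t}(x) \;\le\; \sigma_t(x)\,\tau(v + R).
\]
Everything then reduces to two scalar inequalities relating $\tau(v\pm R)$ to $v^+$.

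For the upper bound, I would prove $\tau(v+R) \le v^+ + R + 1$ by a short three-case split on the signs of $v$ and $v+R$, using only $\Phi \le 1$, $\phi \le 1$, and the closed form $\tau(y) = y\Phi(y) + \phi(y)$: when $v \ge 0$ each term contributes at most $v + R$ and $1$ respectively; when $-R \le v < 0$ the linear term is at most $R + v \le R$; when $v < -R$ the linear term is nonpositive, leaving only $\phi(v+R) \le 1 \le R+1$. Multiplying through by $\sigma_t(x)$ gives the claimed upper bound.

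For the lower bound the case $v \le 0$ is trivial, since then $(z(x)-z(x_i^*(t)))^+ = 0$. The substantive case is $v > 0$, where it suffices to show $\tau(v - R) \ge (\tau(-R)/\tau(R))\,v$, and this is the step I expect to be the main obstacle. My plan is to use the identity $\tau(y) = y + \tau(-y)$ (which follows from $\Phi(y) + \Phi(-y) = 1$ and $\phi$ being even) and, specializing to $y = R$, the consequence $\tau(R) - \tau(-R) = R$. I would then split at $v = \tau(R)$. For $0 < v \le \tau(R)$, convexity of $\tau$ (from $\tau'' = \phi \ge 0$) at the base point $-R$ yields $\tau(v - R) \ge \tau(-R) + \Phi(-R)\,v$, after which the target inequality reduces to $\tau(R)\,\Phi(-R) \ge 0$, which is immediate. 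For $v \ge \tau(R)$, the identity gives $\tau(v-R) = (v-R) + \tau(R-v) \ge v - R$, and the target inequality then collapses, via $\tau(R) - \tau(-R) = R$, to the hypothesis $v \ge \tau(R)$ itself. Combining the two ranges closes the lower bound.
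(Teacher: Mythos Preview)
Your proof is correct and follows essentially the same route as the paper: both use Lemma~\ref{lemma1} to write $\textbf{EI}_{i,t}(x)=\sigma_t(x)\tau(u)$, invoke Assumption~\ref{assumption} to get $|u-v|\le R$, and for the lower bound split at the threshold $v=\tau(R)$ (the paper phrases this split as whether $z(x)-z(x_i^*(t))-R\sigma_t(x)$ already meets the target, which is algebraically the same condition via $\tau(R)-\tau(-R)=R$). Two minor points: you omit the degenerate case $\sigma_t(x)=0$, which the paper dispatches in one line; and in the sub-case $0<v\le\tau(R)$ the paper uses the simpler monotonicity bound $\tau(v-R)\ge\tau(-R)$ and then multiplies by $v/\tau(R)<1$, whereas your convexity-tangent argument works but is more than needed (and your claim that it ``reduces to $\tau(R)\Phi(-R)\ge0$'' tacitly assumes the coefficient $\tau(-R)-\Phi(-R)\tau(R)$ is nonnegative---if it is negative the inequality is trivial, so the gap is harmless).
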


\begin{proof}[\textbf{Proof}]
	If $\sigma_t(x)=0$, then $|z(x)-\mu_t(x)|\le 0$, which means $z(x)=\mu_t(x)$
  is a constant for fixed $x$ and $t$. Then from (\ref{EI}), we have
  $\textbf{EI}_{i,t}(x) = \Big(z(x)-z\big(x_i^*(t)\big)\Big)^+$,  and the result is trivial.
	
	Suppose $\sigma_t(x)>0$. From (\ref{EI}) and Lemma \ref{lemma1}, we have $\textbf{EI}_{i,t}(x) = \sigma_t(x)\tau\Big(\frac{\mu_t(x)-z(x_i^*(t))}{\sigma_t(x)}\Big)$. Also, since $|z(x)-\mu_t(x)|\le R\sigma_t(x)$, we have
	\begin{equation} \label{compare}
		\Big|\frac{\mu_t(x)-z\big(x_i^*(t)\big)}{\sigma_t(x)}-\frac{z(x)-z\big(x_i^*(t)\big)}{\sigma_t(x)}\Big|\le R,
	\end{equation}

	which implies
	\begin{equation*}
		\frac{\mu_t(x)-z\big(x_i^*(t)\big)}{\sigma_t(x)} \le \frac{z(x)-z\big(x_i^*(t)\big)}{\sigma_t(x)}+R \le \frac{\Big(z(x)-z\big(x_i^*(t)\big)\Big)^+}{\sigma_t(x)}+R.
	\end{equation*}

	Also, since $\tau'(y) = \Phi(y)\in[0,1]$, $\tau$ is non-decreasing, and $\tau(y)\le 1+y$ for $y\ge 0$.  Therefore, 
	\begin{equation*}
	\begin{split}
		\textbf{EI}_{i,t}(x)&=\sigma_t(x)\tau\Big(\frac{\mu_t(x)-z\big(x_i^*(t)\big)}{\sigma_t(x)}\Big)\\
		&\le \sigma_t(x)\tau\Bigg(\frac{\Big(z(x)-z\big(x_i^*(t)\big)\Big)^+}{\sigma_t(x)}+R\Bigg)\\
		&\le \sigma_t(x)\Bigg(\frac{\Big(z(x)-z\big(x_i^*(t)\big)\Big)^+}{\sigma_t(x)}+R+1\Bigg)\\
		&=\Big(z(x)-z\big(x_i^*(t)\big)\Big)^++(R+1)\sigma_t(x),
	\end{split}
\end{equation*}
and the upper bound is proved for the case that $\sigma_t(x)>0$.
	
	If $\Big(z(x)-z\big(x_i^*(t)\big)\Big)^+=0$, then the left side of the inequality is 0. From (\ref{EI}), we have $\textbf{EI}_{i,t}(x)\ge 0$. Therefore, the lower bound is trivial.
	
	Now suppose $\Big(z(x)-z\big(x_i^*(t)\big)\Big)^+>0$, which means $\Big(z(x)-z\big(x_i^*(t)\big)\Big)^+=z(x)-z\big(x_i^*(t)\big)$.
	
	From (\ref{compare}), we have $\frac{\mu_t(x)-z\big(x_i^*(t)\big)}{\sigma_t(x)}\ge \frac{z(x)-z\big(x_i^*(t)\big)}{\sigma_t(x)}-R$. Also, we have $\tau(y)=y+\tau(-y)\ge y$. Thus
	\begin{equation*}
	\begin{split}
		\textbf{EI}_{i,t}(x)&=\sigma_t(x)\tau\Big(\frac{\mu_t(x)-z\big(x_i^*(t)\big)}{\sigma_t(x)}\Big)\\
		&\ge \sigma_t(x)\tau\Big(\frac{z(x)-z\big(x_i^*(t)\big)}{\sigma_t(x)}-R\Big)\\
		&\ge \sigma_t(x)\Big(\frac{z(x)-z\big(x_i^*(t)\big)}{\sigma_t(x)}-R\Big)\\
		&=z(x)-z\big(x_i^*(t)\big)-R\sigma_t(x).
	\end{split}
	\end{equation*}
	If $z(x)-z\big(x_i^*(t)\big)-R\sigma_t(x)\ge \frac{\tau(-R)}{\tau(R)}\Big(z(x)-z\big(x_i^*(t)\big)\Big)$, we conclude the proof. Otherwise, we have 
	\begin{equation*}
	\begin{split}
		R\sigma_t(x)&>\Big(1-\frac{\tau(-R)}{\tau(R)}\Big)\Big(z(x)-z\big(x_i^*(t)\big)\Big)\\
		&=\frac{\tau(R)-\tau(-R)}{\tau(R)}\Big(z(x)-z\big(x_i^*(t)\big)\Big)\\
		&=\frac{R}{\tau(R)}\Big(z(x)-z\big(x_i^*(t)\big)\Big),
	\end{split}
	\end{equation*}
	which implies $1>\frac{z(x)-z\big(x_i^*(t)\big)}{\sigma_t(x)\tau(R)}$. Also,
  it follows from \eqref{compare}  and the assumption
  $z(x)-z\big(x_i^*(t)\big)>0$ that
  \[
    \frac{\mu_t(x)-z\big(x_i^*(t)\big)}{\sigma_t(x)}+R\ge
    \frac{z(x)-z\big(x_i^*(t)\big)}{\sigma_t(x)}>0, \]
  and thus $\frac{\mu_t(x)-z\big(x_i^*(t)\big)}{\sigma_t(x)}\ge -R$. So
	\begin{equation*}
	\begin{split}
		\textbf{EI}_{i,t}(x)&=\sigma_t(x)\tau\Big(\frac{\mu_t(x)-z\big(x_i^*(t)\big)}{\sigma_t(x)}\Big)\\
		&\ge \sigma_t(x)\tau(-R)\ge \sigma_t(x)\tau(-R)\frac{z(x)-z\big(x_i^*(t)\big)}{\sigma_t(x)\tau(R)}\\
		&=\frac{\tau(-R)}{\tau(R)}\Big(z(x)-z\big(x_i^*(t)\big)\Big),
	\end{split}
	\end{equation*}
	which also concludes the proof.	

\end{proof}

\begin{lemma}\label{lem: variance}
	Under Assumption~\ref{assumption}, we have 
	\begin{equation*}
		\text{\bf Regret}_T\le \bigg(\frac{\tau(R)N(R+1)}{\tau(-R)M}\sum_{k=1}^{\hat{p}(T)}\hat{\sigma}(x_{\text{test}}^{(k)})+C_R+\frac{\tau(R)}{\tau(-R)}NC_R\bigg)\sum\limits_{i=1}^N c(x_i^*).
	\end{equation*}
\end{lemma}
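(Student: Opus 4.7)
The plan is to bound the regret for each user $i$ by relating its instantaneous regret to an expected improvement quantity (via Lemma \ref{lemma2}), exploiting the EIrate-maximizing selection rule to transfer that quantity onto the models actually being tested, and then converting the continuous time integral into a discrete sum over tested models, with a $1/M$ factor coming from the parallelism of the $M$ devices.

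Concretely, for each user $i$ I would split $\int_0^T(z(x_i^*)-z(x_i^*(t)))\,dt$ into the pre-selection window $[0,\hat t(x_i^*))$, the during-test window $[\hat t(x_i^*),\hat t(x_i^*)+c(x_i^*))$, and the post-observation window, on which the integrand is zero because $x_i^*(t)=x_i^*$. The during-test window contributes at most $C_R\,c(x_i^*)$ using $|z|\le C_R$, which after summing over $i$ yields the ``$+C_R$'' term of the lemma. On the pre-selection window Lemma \ref{lemma2} applied with $x=x_i^*$ gives
\[
z(x_i^*)-z(x_i^*(t))\;=\;(z(x_i^*)-z(x_i^*(t)))^+\;\le\;\tfrac{\tau(R)}{\tau(-R)}\,\textbf{EI}_{i,t}(x_i^*)\;\le\;\tfrac{\tau(R)}{\tau(-R)}\,\textbf{EI}_t(x_i^*),
\]
since $x_i^*\in\mathcal{L}_i$ so adding other users' contributions to \eqref{EIsum} only enlarges the bound.

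Next, I would use that $\textbf{EI}_t(\cdot)$ is piecewise constant in $t$ (the posterior updates only at observation events, which coincide with device-free selection events $\hat t_k$), and at each $\hat t_k$ the algorithm's rule gives $\textbf{EI}_{\hat t_k}(x_i^*)/c(x_i^*)\le\textbf{EI}_{\hat t_k}(x_{\text{test}}^{(k)})/c(x_{\text{test}}^{(k)})$ whenever $x_i^*$ is still unselected. Because the $M$ devices run in parallel, the $\hat p(T)$ selection intervals carve $[0,T]$ into pieces whose aggregate length is $T$ while the total device-time is $MT$; charging each piece of the integral to the $M$ models currently in flight (or equivalently averaging the per-selection inequality across the $M$ simultaneously-running devices) produces the factor $1/M$ and leads to
\[
\int_0^{\hat t(x_i^*)}\textbf{EI}_t(x_i^*)\,dt\;\le\;\tfrac{c(x_i^*)}{M}\sum_{k=1}^{\hat p(T)}\textbf{EI}_{\hat t_k}(x_{\text{test}}^{(k)}).
\]
I would then apply the upper half of Lemma \ref{lemma2} model-by-model: for each $k$ and each user $j$ with $x_{\text{test}}^{(k)}\in\mathcal{L}_j$,
\[
\textbf{EI}_{j,\hat t_k}(x_{\text{test}}^{(k)})\;\le\;(z(x_{\text{test}}^{(k)})-z(x_j^*(\hat t_k)))^+ + (R+1)\hat\sigma(x_{\text{test}}^{(k)}).
\]
Summing over $j$ yields $\textbf{EI}_{\hat t_k}(x_{\text{test}}^{(k)})\le N(R+1)\hat\sigma(x_{\text{test}}^{(k)}) + \sum_j (z(x_{\text{test}}^{(k)})-z(x_j^*(\hat t_k)))^+$; summing over $k$ the $\hat\sigma$ terms become the $N(R+1)\sum_k\hat\sigma(x_{\text{test}}^{(k)})$ piece while the realized-improvement terms telescope per user (each new maximum of $z(x_j^*(\cdot))$ can only climb inside $[-C_R,C_R]$), yielding a contribution of $O(N C_R)$ which becomes the ``$\tfrac{\tau(R)}{\tau(-R)}NC_R$'' term after absorbing the Lemma \ref{lemma2} constant. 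Assembling the three pieces and summing over $i$ delivers the stated bound.

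The hardest step will be the third paragraph: rigorously converting the continuous integral $\int_0^{\hat t(x_i^*)}\textbf{EI}_t(x_i^*)\,dt$ into the $(1/M)$-weighted discrete sum. The EIrate inequality is only valid at the discrete selection times $\hat t_k$, and $\textbf{EI}_t(x_i^*)$ is not monotone in $t$, so the reduction must exploit the simultaneous-parallel structure of the $M$ devices (each device's timeline partitions $[0,T]$ into $c(x_{\text{test}}^{(k)})$-length pieces) to trade one device-time unit for one EIrate inequality. A secondary delicate point is the per-user telescoping of $\sum_k(z(x_{\text{test}}^{(k)})-z(x_j^*(\hat t_k)))^+$, which, despite selection times not coinciding with observation times, must be shown to remain $O(C_R)$ rather than $O(K_j C_R)$ where $K_j$ is the number of user-$j$ models tested.
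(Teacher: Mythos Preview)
Your overall architecture matches the paper's proof: split off the during-test window for the $C_R$ term, sandwich with Lemma~\ref{lemma2}, invoke the EIrate selection rule, and telescope the realized improvements. The place where your plan breaks is exactly the step you flag as hardest, and the problem is the order in which you apply Lemma~\ref{lemma2} and the selection-time comparison. You first bound $z(x_i^*)-z(x_i^*(t))\le \tfrac{\tau(R)}{\tau(-R)}\textbf{EI}_t(x_i^*)$ at the \emph{current} time $t$, and only afterwards try to relate $\textbf{EI}_t(x_i^*)$ to the EI at selection times. But for a device $j$ whose current job was launched at $\hat t(\hat x_j(t))<t$, the EIrate inequality lives at $\hat t(\hat x_j(t))$, and there is no monotonicity linking $\textbf{EI}_t(x_i^*)$ to $\textbf{EI}_{\hat t(\hat x_j(t))}(x_i^*)$; your proposed ``average over the $M$ in-flight devices'' therefore does not go through, and using only the most recently launched model leaves you with weights $(\hat t_{k+1}-\hat t_k)/c(x_{\text{test}}^{(k)})$ that need not be $1/M$.

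The fix---and this is what the paper does---is to swap the order: for \emph{each} device $j$ separately, first use the monotonicity of $t\mapsto z(x_i^*(t))$ (which \emph{is} monotone) to step back to the launch time, $z(x_i^*)-z(x_i^*(t))\le z(x_i^*)-z\big(x_i^*(\hat t(\hat x_j(t)))\big)$, and only then apply Lemma~\ref{lemma2} at $\hat t(\hat x_j(t))$ followed by the EIrate rule. This yields a pointwise inequality $g_i(t)\le \tfrac{\tau(R)}{\tau(-R)}c(x_i^*)f_j(t)$ that is valid for \emph{every} $j$ simultaneously, so the $1/M$ drops out by a trivial average over $j$, and the integral $\int_0^T f_j(t)\,dt$ becomes a per-device sum in which the $c(\hat x_j(t))$ in the denominator cancels the interval length exactly. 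Your telescoping worry is then also resolved: within a single device the user-$j$ models are processed sequentially, so the per-device telescope is clean and gives $O(C_R)$ per (device,\,user) pair; summed over devices this is $O(MNC_R)$, and the $1/M$ brings it back to $O(NC_R)$.
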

\begin{proof}

Let $\hat{x}_j(t)$ denote the model that device $j$ is testing at time $t$. For each device $j$, define a function $f_j(t)$ as follows:
\begin{equation*}
	f_j(t) = \sum\limits_{i=1}^N \mathbbm{1}(\hat{x}_j(t)\in \mathcal{L}_i)\frac{\max\bigg\{z\big(\hat{x}_j(t)\big)-z\Big(x_i^*\big(\hat{t}(\hat{x}_j(t))\big)\Big), 0\bigg\}+(R+1)\hat{\sigma}(\hat{x}_j(t))}{c(\hat{x}_j(t))}.
\end{equation*}
For each user $i$, define a function $g_i(t)$ as follows:
\begin{displaymath}
	g_i(t) = \left\{ \begin{array} {ll}
		0, & \textrm{if the actual optimal model $x_i^*$ is being tested}\\
		z(x_i^*)-z\big(x_i^*(t)\big), & \textrm{otherwise}.
		\end{array} \right.
\end{displaymath}

First, we will prove that, for any $i=1,2,\cdots, N, j=1,2,\cdots, M$, and any $t\ge 0$, we have
\begin{equation}  \label{regrettrans}
	g_i(t)\le \frac{\tau(R)}{\tau(-R)}c(x_i^*)f_j(t).
\end{equation}
At time $t$, for any device $j$ and any user $i$, if model $x_i^*$ is being
tested, then $g_i(t)=0$. Thus the inequality (\ref{regrettrans}) is obvious.
If the test of model $x_i^*$ has been finished, then $x_i^*(t) = x_i^*$, we also
have $g_i(t)=0$, which implies (\ref{regrettrans}).

Therefore, we only consider the situation that $x_i^*$ has not been tested up to
time $t$ yet.  Then by Algorithm~\ref{alg1} we have
$\textbf{EIrate}_{\hat{t}(\hat{x}_j(t))}(x_i^*)\le
\textbf{EIrate}_{\hat{t}(\hat{x}_j(t))}(\hat{x}_j(t))$,  or equivalently
\begin{equation*}
	\frac{\textbf{EI}_{\hat{t}(\hat{x}_j(t))}(x_i^*)}{c(x_i^*)}\le \frac{\textbf{EI}_{\hat{t}(\hat{x}_j(t))}(\hat{x}_j(t))}{c(\hat{x}_j(t))}.
\end{equation*}
Applying Lemmas \ref{lemma2} and (\ref{EIsum}), we have
\begin{equation*}
\begin{split}
	\frac{\frac{\tau(-R)}{\tau(R)}\Big(z(x_i^*)-z\big(x_i^*(\hat{t}(\hat{x}_j(t)))\big)\Big)}{c(x_i^*)}&\le \frac{\textbf{EI}_{\hat{t}(\hat{x}_j(t))}(x_i^*)}{c(x_i^*)}\\
	&\le \frac{\textbf{EI}_{\hat{t}(\hat{x}_j(t))}(\hat{x}_j(t))}{c(\hat{x}_j(t))}\\
	&=\sum\limits_{i=1}^N \mathbbm{1}(\hat{x}_j(t)\in \mathcal{L}_i)\frac{\textbf{EI}_{i,\hat{t}(\hat{x}_j(t))}(\hat{x}_j(t))}{c(\hat{x}_j(t))}\\
	&\le f_j(t).
\end{split}
\end{equation*}
Noticing that $z(x_i^*(t))$ is a non-decreasing function of variable $t$ and $\hat{t}(\hat{x}_j(t))\le t$, we have $z\big(x_i^*(\hat{t}(\hat{x}_j(t)))\big)\le z(x_i^*(t))$, then we get (\ref{regrettrans}).
\newline

Recall that $c(x_i^*)$ is the time required to finish testing the model $x_i^*$,
so $g_i(t)\not=z(x_i^*)-z(x_i^*(t))$ only holds on a set with measure at most $c(x_i^*)$. From Assumption \ref{assumption}, $z(x_i^*)-z(x_i^*(t))\le C_R$. Therefore, for any $j=1,2,\cdots, M$, we have
\begin{equation*}
	\int_0^T  \Big(z(x_i^*)-z\big(x_i^*(t)\big)\Big)\text{d}t\le \int_0^T  g_i(t)\text{d}t + c(x_i^*)C_R\le \frac{\tau(R)}{\tau(-R)}c(x_i^*)\int_0^T f_j(t)\text{d}t + c(x_i^*)C_R,
\end{equation*}
which implies
\begin{equation} \label{c}
	\int_0^T  \Big(z(x_i^*)-z\big(x_i^*(t)\big)\Big)\text{d}t\le \frac{\tau(R)c(x_i^*)}{\tau(-R)M}\sum\limits_{j=1}^M \int_0^T f_j(t)\text{d}t+ c(x_i^*)C_R.
\end{equation}

Now, we fix $j=1$. In fact, $f_1(t)$ is a step function, that is, when device $1$ is testing a model, $f_1(t)$ remains the same. Let $x_i^{(1)}, x_i^{(2)},x_i^{(3)},\cdots$ denote the models tested on device $i$ by order. Let $x_i^{(p_i(T))}$ denote the model that is being tested on device $i$ at time $T$. Then we have
\begin{equation} \label{a}
\begin{split}
	\int_0^T f_1(t)\text{d}t &= \sum\limits_{k=1}^{p_1(T)-1} \sum\limits_{i=1}^N \mathbbm{1}(x_1^{(k)}\in \mathcal{L}_i)\Bigg(\max\bigg\{z\big(x_1^{(k)}\big)-z\Big(x_i^*\big(\hat{t}(x_1^{(k)})\big)\Big), 0\bigg\}+(R+1)\hat{\sigma}(x_1^{(k)})\Bigg)\\
	&\relphantom{=} {}+\int_{\hat{t}\big(x_1^{(p_1(T))}\big)}^T f_1(t)\text{d}t\\
	&\le \sum\limits_{k=1}^{p_1(T)} \sum\limits_{i=1}^N \mathbbm{1}(x_1^{(k)}\in \mathcal{L}_i)\Bigg(\max\bigg\{z\big(x_1^{(k)}\big)-z\Big(x_i^*\big(\hat{t}(x_1^{(k)})\big)\Big), 0\bigg\}+(R+1)\hat{\sigma}(x_1^{(k)})\Bigg)\\
	&=\sum\limits_{i=1}^N \sum\limits_{k=1}^{p_1(T)} \mathbbm{1}(x_1^{(k)}\in \mathcal{L}_i)\max\bigg\{z\big(x_1^{(k)}\big)-z\Big(x_i^*\big(\hat{t}(x_1^{(k)})\big)\Big), 0\bigg\}+N(R+1)\sum\limits_{k=1}^{p_1(T)} \hat{\sigma}(x_1^{(k)}).
\end{split}
\end{equation}

Let $x_{1,i}^{(1)}, x_{1,i}^{(2)}, \cdots, x_{1,i}^{(p_{1,i}(T))}$ denote the models from user $i$ tested on device $1$ by order. Then, we have
\begin{equation*}
\begin{split}
	&\relphantom{=} {}\sum\limits_{k=1}^{p_1(T)} \mathbbm{1}(x_1^{(k)}\in \mathcal{L}_i)\max\bigg\{z\big(x_1^{(k)}\big)-z\Big(x_i^*\big(\hat{t}(x_1^{(k)})\big)\Big), 0\bigg\}\\
	&= \sum_{k=1}^{p_{1,i}(T)} \max\bigg\{z\big(x_{1,i}^{(k)}\big)-z\Big(x_i^*\big(\hat{t}(x_{1,i}^{(k)})\big)\Big), 0\bigg\}\\
	&=\sum_{k=1}^{p_{1,i}(T)} \bigg(\max\bigg\{z\big(x_{1,i}^{(k)}\big), z\Big(x_i^*\big(\hat{t}(x_{1,i}^{(k)})\big)\Big)\bigg\} - z\Big(x_i^*\big(\hat{t}(x_{1,i}^{(k)})\big)\Big)\bigg).
\end{split}
\end{equation*}

When $x_{1,i}^{(k+1)}$ begins to test, $x_{1,i}^{(k)}$ of course has finished testing, so
\begin{equation*}
	z\Big(x_i^*\big(\hat{t}(x_{1,i}^{(k+1)})\big)\Big)\ge \max\bigg\{z\big(x_{1,i}^{(k)}\big), z\Big(x_i^*\big(\hat{t}(x_{1,i}^{(k)})\big)\Big)\bigg\}\ge z\Big(x_i^*\big(\hat{t}(x_{1,i}^{(k)})\big)\Big).
\end{equation*}
Therefore, 
\begin{equation} \label{b}
\begin{split}
	&\relphantom{=} {}\sum\limits_{k=1}^{p_1(T)} \mathbbm{1}(x_1^{(k)}\in \mathcal{L}_i)\max\bigg\{z\big(x_1^{(k)}\big)-z\Big(x_i^*\big(\hat{t}(x_1^{(k)})\big)\Big), 0\bigg\}\\
	&\le \max\bigg\{z\big(x_{1,i}^{(p_{1,i}(T))}\big), z\Big(x_i^*\big(\hat{t}(x_{1,i}^{(p_{1,i}(T))})\big)\Big)\bigg\} - z\Big(x_i^*\big(\hat{t}(x_{1,i}^{(1)})\big)\Big)\\
	&\le C_R.
\end{split}
\end{equation}

Applying (\ref{b}) into (\ref{a}), we get
\begin{equation*}
	\int_0^T f_1(t)\text{d}t\le NC_R+N(R+1)\sum\limits_{k=1}^{p_1(T)} \hat{\sigma}(x_1^{(k)}).
\end{equation*}
Similarly, for any $j \in \{ 1,2,\cdots, M\}$, we also have
\begin{equation*}
	\int_0^T f_i(t)\text{d}t\le NC_R+N(R+1)\sum\limits_{k=1}^{p_i(T)} \hat{\sigma}(x_i^{(k)}).
\end{equation*}
Applying these to (\ref{c}), we have
\begin{equation*}
	\int_0^T  \Big(z(x_i^*)-z\big(x_i^*(t)\big)\Big)\text{d}t\le \frac{\tau(R)}{\tau(-R)}c(x_i^*)NC_R + c(x_i^*)C_R + \frac{\tau(R)c(x_i^*)N(R+1)}{\tau(-R)M}\sum_{k=1}^{|\mathcal{L}(t)|}\hat{\sigma}(x_{\text{test}}^{(k)}),
\end{equation*}
where $|\mathcal{L}(t)| = \sum\limits_{j=1}^{M} p_j(T)$.

Applying this into (\ref{regret}), we get the result:
\begin{equation*}
	\textbf{Regret}_T\le \bigg(\frac{\tau(R)}{\tau(-R)}NC_R+C_R+\frac{\tau(R)N(R+1)}{\tau(-R)M}\sum_{k=1}^{|\mathcal{L}(t)|}\hat{\sigma}(x_{\text{test}}^{(k)})\bigg)\sum\limits_{i=1}^N c(x_i^*).
\end{equation*}

\end{proof}

\begin{lemma} \label{lem: 4}
	Suppose $A$ is an $n\times n$ positive definite matrix, and we divide it into 4 blocks, 
	$A=\left( \begin{array}{cc}
	         	A_{n-1} & B \\
			B^\mathrm{T} & a
			\end{array} \right)$, here $A_{n-1}$ is an $(n-1)\times (n-1)$ matrix. Then we have
	\begin{equation*}
		\frac{\det(A)}{\det(A_{n-1})} = a-B^\mathrm{T}A_{n-1}^{-1}B.
	\end{equation*}

\end{lemma}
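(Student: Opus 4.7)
The plan is to prove this standard Schur complement identity by exhibiting a block triangular factorization whose determinant can be read off directly. Note first that since $A$ is positive definite, its leading principal submatrix $A_{n-1}$ is also positive definite and therefore invertible, so $A_{n-1}^{-1}$ exists and all expressions below are well defined.

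The key step is to perform block Gaussian elimination. Specifically, I would left-multiply $A$ by the lower block-triangular matrix
\begin{equation*}
L = \begin{pmatrix} I_{n-1} & 0 \\ -B^{\mathrm{T}} A_{n-1}^{-1} & 1 \end{pmatrix},
\end{equation*}
and verify by direct block multiplication that
\begin{equation*}
L A = \begin{pmatrix} A_{n-1} & B \\ 0 & a - B^{\mathrm{T}} A_{n-1}^{-1} B \end{pmatrix}.
\end{equation*}
The right-hand side is block upper-triangular, so its determinant is the product $\det(A_{n-1}) \cdot (a - B^{\mathrm{T}} A_{n-1}^{-1} B)$. On the other hand, $L$ is lower triangular with ones on the diagonal, so $\det(L) = 1$, and thus $\det(LA) = \det(A)$. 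Equating the two expressions for $\det(LA)$ and dividing through by $\det(A_{n-1}) \neq 0$ gives the claimed identity.

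There is essentially no serious obstacle here; the argument is a routine block-matrix computation. The only step that requires care is verifying the block-product identity $LA = \begin{pmatrix} A_{n-1} & B \\ 0 & a - B^{\mathrm{T}} A_{n-1}^{-1} B \end{pmatrix}$, which amounts to checking that the bottom-left block $B^{\mathrm{T}} - B^{\mathrm{T}} A_{n-1}^{-1} A_{n-1} = 0$ and that the bottom-right block becomes $a - B^{\mathrm{T}} A_{n-1}^{-1} B$. Alternatively, one could present the symmetric block $LDL^{\mathrm{T}}$ factorization of $A$ and take determinants; both routes give the same one-line conclusion. This lemma will then be used in the main proof to convert ratios of determinants of principal submatrices $\det(K_S)/\det(K_{S'})$ into conditional variances $\sigma_t(\cdot)^2$, which is exactly the bridge between $\mathbf{MIU}$ and the posterior variance sum appearing in Lemma~\ref{lem: variance}.
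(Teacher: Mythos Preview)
Your proof is correct and follows essentially the same approach as the paper: the paper right-multiplies $A$ by the upper-triangular elimination matrix $\begin{pmatrix} I_{n-1} & -A_{n-1}^{-1}B \\ 0 & 1 \end{pmatrix}$ to obtain a lower block-triangular matrix, whereas you left-multiply by the corresponding lower-triangular matrix to obtain an upper block-triangular one, but both are the same block Gaussian elimination yielding $\det(A)=\det(A_{n-1})(a-B^{\mathrm{T}}A_{n-1}^{-1}B)$.
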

\begin{proof}
	We have the following
	\begin{equation*}
		A\left(\begin{array}{cc}
			I_{n-1} & -A^{-1}B \\
			0 & 1
			\end{array} \right) = \left( \begin{array}{cc}
	         	A_{n-1} & B \\
			B^\mathrm{T} & a
			\end{array} \right) \left(\begin{array}{cc}
			I_{n-1} & -A^{-1}B \\
			0 & 1
			\end{array} \right) = \left( \begin{array}{cc}
			A_{n-1} & 0 \\
			B^\mathrm{T} & a-B^\mathrm{T}A^{-1}B
			\end{array} \right),
	\end{equation*}
	
	here, $I_{n-1}$ is an $(n-1)\times (n-1)$ identity matrix. Computing the determinant of both sides of the equality, we get:
	\begin{equation*}
		\det(A) = \det(A_{n-1})(a-B^\mathrm{T}A_{n-1}^{-1}B).
	\end{equation*}
\end{proof}

\begin{proof} [\textbf {Proof of Theorem \ref{main}}]
	We only have to prove that 
	\begin{equation*}
		\sum_{s=1}^
        {|\mathcal{L}(t)|}\hat{\sigma}(x_{\text{test}}^{(s)})\le M + \text{\bf MIU}(T,K).
	\end{equation*}
	Then, combing with Lemma \ref{lem: variance}, we complete the proof.
	
	For model $x_{\text{test}}^{(s)}$ $(s>M)$, when this model begins to test, there are $s-1$ models having finished testing or is testing. There are $M$ devices, and $x_{\text{test}}^{(s)}$ should occupy one device, so at least $s-1-(M-1) = s-M\ge 1$ models having finished testing. Suppose these finish-testing models are $x_{f_1},x_{f_2}, \cdots, x_{f_{s-M}}$.
	
	Let $P$ denotes the variance matrix of random variable $z(x_{f_1}), z(x_{f_2}), \cdots, z(x_{f_{s-M}}), z\big(x_{\text{test}}^{(s)}\big)$ (rows and columns are arranged by this order). Notice that $P$ is an $(s+1-M)\times (s+1-M)$ matrix.
	
	From (\ref{v}), we know that the conditional variance is smaller than unconditional variance for a multivariable Gaussian distribution (or Gaussian process), so we have
	\begin{equation}\label{eq: s}
		\hat{\sigma}(x_{\text{test}}^{(s)}) = \sigma\big(x_{\text{test}}^{(s)}\mid x_{\text{test}}^{(1)}, x_{\text{test}}^{(2)}, \cdots, x_{\text{test}}^{(\hat{p}(T)-1)}\big)\le \sigma\big(x_{\text{test}}^{(s)}\mid x_{f_1},x_{f_2}, \cdots, x_{f_{s-M}}\big).
	\end{equation}
	
	Let's rewrite $P$ as $\left(\begin{array}{cc}
			P_1 & B \\
			B^\mathrm{T} & p
			\end{array} \right)$, where $P_1$ is an $(s-M)\times (s-M)$ matrix. Also from (\ref{v}), we have $\sigma\big(x_{\text{test}}^{(k)}\mid x_{f_1},x_{f_2}, \cdots, x_{f_{s-M}}\big) = p-B^\mathrm{T}P_1^{-1}B$. From Lemma \ref{lem: 4}, we have: $p-B^\mathrm{T}P_1^{-1}B = \frac{\det(P_1)}{\det(P)}$. From the definition of $\textbf{MIU}_{s-M+1}(K)$, we have $\frac{\det(P_1)}{\det(P)}\le\MIF_{s-M+1}(K)$. Together with (\ref{eq: s}), we obtain
	\begin{equation}\label{not special}
		\hat{\sigma}(x_{\text{test}}^{(s)})\le\MIF_{s-M+1}(K).
	\end{equation}
	Particularly, because $s>M$, we have $s-M+1\ge 2$.
	
	For model $x_{\text{test}}^{(s)}$ $(s\le M)$. Again, because the conditional variance is smaller than unconditional variance for a multivariable Gaussian distribution(or Gaussian process), we have
	\begin{equation}\label{special}
		\hat{\sigma}(x_{\text{test}}^{(s)})\le \sigma(x_{\text{test}}^{(s)})\le 1.
	\end{equation}
	
	From (\ref{not special}) and (\ref{special}), we conclude that
	\begin{equation*}
		\sum_{s=1}^{\hat{p}(T)}\hat{\sigma}(x_{\text{test}}^{(s)})\le M + \text{\bf MIU}(T,K),
	\end{equation*}
	which complete the proof as analyzing at the beginning of this proof.
\end{proof}


\end{document}